\documentclass[11pt]{article}
\usepackage[T1]{fontenc}
\usepackage{latexsym,amssymb,amsmath,amsfonts,amsthm}
\usepackage{graphics}
\usepackage{enumitem}
\usepackage{diagbox}
\usepackage{multirow}
\usepackage{makecell}
\usepackage{graphicx}
\usepackage{algorithmicx}
\usepackage{algorithm}
\usepackage{mathrsfs}
\usepackage{subfigure}
\usepackage{color}
\usepackage{hyperref}
\usepackage{cite}
\usepackage{accents}
\makeatletter
\def\widebar{\accentset{{\cc@style\underline{\mskip14mu}}}}

\makeatother
\hypersetup{hidelinks}
\newcommand{\bds}[1]{\boldsymbol{#1}}
\newtheorem{theorem}{Theorem}[section]

\newtheorem{remark}[theorem]{Remark}
\newtheorem{assumption}[theorem]{Assumption}

\definecolor{hw}{rgb}{1,0,0}
\definecolor{lk}{rgb}{0,0,1}
\begin{document}
\renewcommand{\theequation}{\arabic{section}.\arabic{equation}}

\title{\bf PE-CSNet: An equivariant network architecture with learnable patch-based sparse representation}
\date{}

\author{Kai Li\thanks{Department of Applied Mathematics, The Hong Kong Polytechnic University, Hung Hom, Hong Kong. (\texttt{kai1li@polyu.edu.hk}, \texttt{zhi.zhou@polyu.edu.hk}).}
\and Haitao Long\thanks{SKLMS and Academy of Mathematics and Systems Science, Chinese Academy of Sciences,
Beijing 100190, China and School of Mathematical Sciences, University of Chinese Academy of Sciences,
Beijing 100049, China (\texttt{longhaitao@amss.ac.cn}, \texttt{b.zhang@amt.ac.cn}).}
\and
Bo Zhang\footnotemark[2]
\and
Haiwen Zhang\thanks{SKLMS and Academy of Mathematics and Systems Science, Chinese Academy of Sciences,
Beijing 100190, China ({\tt zhanghaiwen@amss.ac.cn}).} 
\and 
Zhi Zhou\footnotemark[1]
}
\maketitle

\begin{abstract}
Compressive sensing (CS) enables accurate signal reconstruction from sparse measurements and is widely applied in medical imaging, remote sensing, and image compression.
However, designing an effective, task-specific sparse transform and the corresponding optimization procedure for high-quality CS remains challenging.
This process typically requires expert domain knowledge and laborious parameter tuning.
To address this issue, we present a Patch-based Equivariant deep unrolling architecture, termed PE-CSNet, for accurate CS recovery.
While traditional CS methods generally use predefined patch-based transform sparsity, we generalize this idea by incorporating learnable transform sparsity that adapts to the specific CS task through an optimization-driven process.
Specifically, we first establish a generalized patch-based CS model, which we solve via a block coordinate descent (BCD) algorithm.
The BCD solver is then unrolled into a deep neural network, where all parameters of both the CS model and solver are learned through end-to-end training.
To improve data efficiency, we introduce a stochastic equivariant training strategy that exploits the patch-wise structure of the network, enabling PE-CSNet to learn effectively even from limited data.
We further provide a simpler, parameter-shared version of PE-CSNet and briefly discuss its convergence as an iterative solver.
For practical applications, the network uses stage-specific (non-shared) parameters to enhance its expressive power and thereby improve its performance.
On the tasks of CS magnetic resonance imaging (CS-MRI) and CS coded diffraction patterns (CS-CDP), PE-CSNet achieves state-of-the-art accuracy with fast computational speed, outperforming traditional methods and existing deep unrolling methods.

{\bf Keywords:} compressive sensing, deep learning, algorithm unrolling, patch-based regularization, magnetic resonance imaging (MRI), convergence analysis
\end{abstract}

\section{Introduction}\label{S1}
\setcounter{equation}{0}

Compressive sensing (CS) challenges the traditional sampling paradigm by proving that sparse signals can be exactly recovered from far fewer samples than those required by the Shannon-Nyquist theorem \cite{CEJ08, DDL06}.
This recovery is achieved by exploiting inherent signal sparsity in a known transform domain.
The profound impact of this theory is evidenced by its successful applications in computational imaging, including the single-pixel camera \cite{EMP19}, ultrasound video segmentation \cite{LJZ16}, and most notably, magnetic resonance imaging (MRI) \cite{LMD08}, where it directly enables faster scans and reduces patient burden.
The core problem is mathematically formulated as follows.
Let $ \bds{x}\in\mathbb{C}^Q $ denote a signal with certain transform sparsity \cite{DDL06}.
Given an undersampling measurement matrix $ \Phi\in \mathbb{C}^{P\times Q}\ (P\ll Q) $, the observed data is $ \bds{y} := \Phi \bds{x} \in \mathbb{C}^P $.
A typical CS model recovers the unknown signal $ \bds{x} $ from $ \bds{y} $ by solving the following variational model \cite{DDL06}:
\begin{equation}\label{1}
\hat{\bds{x}} = \underset{\bds{x}}{\arg\min}\left\{\dfrac{1}{2}\|\Phi \bds{x} - \bds{y}\|_2^2 + \lambda g(\Psi \bds{x})\right\},
\end{equation}
where $ \Psi $ is a sparsifying transform (e.g., a wavelet transform), and $ g(\cdot) $ is a sparsity-promoting regularization function, such as an $ \ell_q\;(0\leq q\leq 1) $ norm or a mixed norm \cite{UMP11, NDT09}.
Here, $ \|\cdot\|_2 $ denotes the $ \ell_2 $ norm, which quantifies the data fidelity.
The first term in \eqref{1} enforces this fidelity, the second term promotes the sparsity of the transform coefficients, and the regularization parameter $\lambda>0$ balances these two terms.

Within the above framework, accomplishing a CS task requires the careful design of a sparsifying transform $\Psi$ and a regularization function $ g(\cdot) $ that encode the a priori knowledge of the unknown signals for the variational model \eqref{1}, followed by an efficient optimization algorithm to solve this model.
However, in practical applications, designing both components effectively is highly challenging, often demanding expert knowledge and laborious manual tuning \cite{ZJC23}.
In this work, we focus on the widely-used patch-based sparsity model.
To address the above challenges, we propose a Patch-based Equivariant deep architecture, PE-CSNet, which automatically learns both the patch-based transform sparsity and the optimization procedure directly from data.

Initially, sparse regularization in CS was applied directly in the image domain or in traditional transform domains, such as finite difference domain or wavelet domain \cite{DID04, HLC09, LCY09}.
These regularization strategies, however, usually provide insufficient sparse representations, failing to capture complex image structures \cite{QXG12, KFB11}.
This limitation has motivated the development of more sophisticated sparse models, such as non-local sparsity \cite{MJB09, QXH14, DWS14}, group or structured sparsity \cite{UMP11, QXG12, ZJZ14}, dictionary-based sparsity \cite{RSB10, ZZC16, CJL20} and low-rankness \cite{LSG11, HZN21, ZZW23}.
In particular, patch-based regularization has been extensively studied as a powerful framework that underlies many of these advances.
A key strategy in patch-based methods is non-local processing, which exploits self-similarity by grouping similar patches from different image regions for joint regularization \cite{DKF07, BAC10, QXH14, DWS14, ZJZ14, RYL22}.
For instance, the BM3D method \cite{DKF07} groups similar 2D patches into 3D arrays and applies collaborative filtering.
A general reconstruction model was proposed in \cite{QXH14}, which uses the patch-based non-local operator (PANO) to obtain sparse representations of patch groups.
In \cite{XYY16}, the authors used image patches to form a dictionary, and then recovered the image patches by computing their sparse coefficients.
The idea of non-local regularization has also been combined with low-rank modeling \cite{DWS14} and group sparse representation \cite{ZJZ14}, demonstrating promising results in diverse CS tasks.
Despite their success, these methods share a fundamental limitation: their effectiveness relies on predefined regularization models (e.g., non-local means, specific group structures, or fixed dictionaries), whose design demands considerable domain expertise, and inaccuracies in these handcrafted priors can limit their reconstruction performance \cite{CDD23}.

To solve the CS model \eqref{1}, a variety of iterative algorithms have been proposed, including the fast iterative shrinkage-thresholding algorithm (FISTA) \cite{BAT09}, the Chambolle-Pock algorithm \cite{CAP11}, the alternating direction method of multipliers (ADMM) \cite{SBN11} and block coordinate descent (BCD) \cite{WSJ15}.
The BCD algorithm, in particular, is a popular variable-splitting approach for solving the CS model \eqref{1} \cite{RSB15, WBR17}.
It works by introducing auxiliary variables and solving a sequence of simpler subproblems alternately.
While BCD is theoretically well-founded with convergence guarantees \cite{WSJ15, BAT13}, its practical application faces two major drawbacks.
First, it typically requires numerous iterations to converge, incurring high computational cost; second, its performance is sensitive to hyperparameters (e.g., step sizes and penalty parameters), which typically require empirical tuning.

The CS methods discussed above are model-driven approaches based on a variational formulation comprising a data-fidelity term and a sparsity-promoting regularizer.
Although these CS methods have a solid theoretical foundation \cite{DDL06, CEJ06}, their practical performance heavily depends on several manually selected components, including the sparsifying transform $\Psi$, the regularizer $g(\cdot)$, the regularization parameter $\lambda$, and algorithmic hyperparameters.
This reliance on manual tuning remains a significant challenge.

In recent years, deep learning has emerged as a powerful technique for CS reconstruction \cite{MAP15, ZKZW17, ZJG18, YYS20, YYW23, CZX24, CBZ25}.
Early efforts primarily employed classical network architectures, such as convolutional neural networks (CNNs), to directly learn the inversion of the acquisition process $ y = \Phi x $ \cite{MAP15, ZKZW17}.
Specifically, given a dataset $ \{(\bds{y}_i,\bds{x}_i)\}_{i=1}^S $ consisting of measurements $ \bds{y}_i $ and their corresponding true signal $ \bds{x}_i $, a deep network architecture $ f_\textrm{NN}(\cdot;\Theta) $ is trained to map $ \bds{y}_i $ to $ \bds{x}_i $ by minimizing the empirical loss:
\begin{equation}\label{2}
\widehat{\Theta} = \underset{\Theta}{\arg\min}\dfrac{1}{S}\sum_{i=1}^S L\left(\bds{x}_i, f_\textrm{NN}(\bds{y}_i;\Theta)\right),
\end{equation}
where $L$ is a loss function (e.g., $ \ell_2 $ loss) and $\Theta$ are trainable parameters in the deep neural network.
Such methods learn the reconstruction directly from data, thus avoiding the need to manually determine the transform sparsity and hyperparameters, and they are typically fast.
However, CS reconstruction relies on effectively capturing transform sparsity, which is inherently challenging for a conventional black-box network to learn.
Moreover, the success of these methods typically requires a large amount of training data, which is often prohibitive in applications such as accelerated MRI.
A natural strategy, therefore, is to combine model-driven methods with deep learning, bringing together the strengths of both approaches.
This philosophy directly aligns with that of algorithm unrolling \cite{MVL21}, which was initially introduced to obtain fast neural network approximations for sparse coding \cite{GKL10}.
A prominent example of applying unrolled networks to CS is presented in \cite{YYS20}, where the authors unrolled the ADMM algorithm into a deep network to optimize a generalized CS model with a learnable sparsifying transform.
All parameters of the CS model and the ADMM algorithm are learned via end-to-end training similar to \eqref{2}.
Notably, the gradient of the undetermined sparse regularizer is parameterized by a small sub-network comprising convolutional and nonlinear activation layers.
Similarly, the iterative shrinkage-thresholding algorithm (ISTA) was unrolled into a deep architecture dubbed ISTA-Net for CS \cite{ZJG18}.
In this network, the sparsifying transform $ \Psi $ in \eqref{1} is implemented as a combination of two linear convolutional operators separated by a rectified linear unit (ReLU), while the regularizer $ g(\cdot) $ in \eqref{1} is the $ \ell_1 $ norm.
In \cite{YYW23}, the authors proposed a novel non-convex reconstruction model for blind parallel MRI, where the magnetic resonance image and multichannel sensitivity maps are jointly estimated.
They regularized both the image and sensitivity maps, and introduced a proximal alternating linearized minimization (PALM) algorithm to solve the resulting optimization problem.
The authors in \cite{CZX24} proposed a modified training approach and introduced termination criteria for deep unrolling architectures, thereby establishing unrolling as an iterative regularization technique.
Based on this framework, they unrolled the proximal gradient descent algorithm for compressive sensing MRI.
Recently, PCNet \cite{CBZ25} was proposed, which features a collaborative sampling operator and a deep unrolled reconstruction network enhanced by window-based Transformer modules.
The method leverages a large-scale dataset and improved training strategies for effective optimization.
For a broader overview of deep learning-based CS methods, we refer the reader to the surveys \cite{BMB18, ASM19, MVL21, ZJC23, ZZW23, ZLZ26} and the references therein.

The aforementioned deep unrolling methods primarily aim to learn a general sparsifying transform within a relatively constrained network hypothesis space.
As a result, such approaches may not yield sufficiently powerful sparse representations and potentially require extensive training data.
Building on this insight, we aim to learn an adaptive patch-based sparsifying transform directly from data, thereby seeking to combine the structural advantage of patch-based CS models with the benefit introduced by deep learning methods.

Recently, equivariant imaging (EI) has emerged as a powerful framework for imaging inverse problems, enabling more data-efficient learning by preserving the inherent symmetries of the data throughout the reconstruction \cite{CDD23}.
In typical equivariant imaging methods, the signal space $\mathcal{X}\subset\mathbb{C}^N$ is assumed to be invariant under a group of transformations $ \{\mathcal{T}_i\}_{i=1}^T $ (e.g., shifts, rotations and flips).
That is, for any true signal $ \bds{x}\in \mathcal{X} $, all transformed signals $ \mathcal{T}_i \bds{x}\; (i=1,2,\ldots,T) $ also lie in $\mathcal{X}$.
Given a nonlinear reconstruction map that is trained as in \eqref{2}, to ensure that the reconstruction is compatible with these symmetries, it is desired that the composition $ f_\textrm{NN}\circ\Phi $ be equivariant to the transformations, that is,
\begin{equation}\label{3}
f_\textrm{NN}(\Phi(\mathcal{T}_i\bds{x}); \Theta)=\mathcal{T}_i f_\textrm{NN}(\Phi(\bds{x}), \Theta)\approx \mathcal{T}_i\bds{x},\quad i=1,2,\ldots,T.
\end{equation}
This principle has enabled fully unsupervised frameworks for various CS tasks \cite{CDT21, CDT22}, achieving satisfactory reconstruction quality.
The Flipping and Rotation Invariant Sparsifying Transform (FRIST) was introduced in \cite{WBR17} to learn a sparsifying transform that is invariant to flips and rotations, thereby better representing natural images containing textures with diverse geometric orientations.
In \cite{CTS16}, group equivariant CNNs (GCNNs) were designed with built-in invariances, such as rotations and flips, for general computer vision tasks.
A closely related concept is data augmentation (DA) \cite{LKV15}, widely adopted across machine learning for its simplicity and effectiveness.
For instance, in the case of AlexNet \cite{KAS17}, a pioneering deep learning model for image classification, the ImageNet dataset was expanded by a factor of 2,048 using DA, resulting in approximate equivariance \cite{LKV15}.
This involves generating new training samples by applying transformations such as flips, scaling, and rotations, while preserving their original labels.
The authors in \cite{FZH21} introduced a DA pipeline for accelerated MRI reconstruction and evaluated its effectiveness in lowering the data requirement under diverse conditions.
In order to improve deep unrolling methods, the authors in \cite{FJX24} proposed a rotation equivariant proximal network that effectively embeds rotation symmetry priors into the deep unrolling framework.
Such an effective equivariant architecture is applied to various image restoration tasks, such as image de-raining and computed tomography (CT).

Motivated by the complementary strengths of model-based sparse regularization, deep unrolled architectures, and EI techniques, we introduce a learnable, patch-based unrolled network that adaptively learns sparsifying transforms from data within a CS optimization framework.
To enhance its performance with limited data, we integrate EI techniques into the architecture.
This combination enables the network to cope with small datasets and achieve better generalization.
To be more specific, similar to \eqref{1}, our framework begins with a generalized patch-based CS model that focuses on the image reconstruction setting:
\begin{equation}\label{4}
\hat{\bds{x}} = \underset{\bds{x}}{\arg\min}\left\{\dfrac{1}{2}\|\Phi \bds{x}-\bds{y}\|_2^2 + \sum_{i,j}\mathcal{R}(D_{ij}\bds{x})\right\},
\end{equation}
where the image (with a slight abuse of notation) $ \bds{x}\in \mathbb{C}^{M\times M} $ can be viewed as a $ (N_p\times N_p) $ grid of non-overlapping patches, each of size $ (m\times m) $, yielding $ N_p = M/m $.
The operator $ D_{ij}\;(1\leq i,j\leq  N_p,\;i,j\in \mathbb{N}^+) $ denotes a linear operator that extracts the $ (i,j) $-th non-overlapping patch from the image $ \bds{x} $ (see Figure \ref{F1} for illustration), and $\mathcal{R}$ is a learnable sparse regularization function.
The full image can be reconstructed linearly from the complete set of patches $ \{D_{ij}\bds{x}\} $, an operation we denote by $ \mathtt{Reassemble}(\cdot) $, i.e., $ \mathtt{Reassemble}(\{D_{ij}\bds{x}\}) = \bds{x} $.
Notably, the proposed formulation \eqref{4} generalizes many traditional patch-based transform sparsity, such as non-local sparsity \cite{QXH14}, group sparsity \cite{MJB09} and structured sparsity \cite{QXG12}.
To solve the generalized patch-based CS model \eqref{4}, we apply a BCD iterative algorithm.
By unrolling and generalizing the BCD iterations, we obtain a deep network architecture termed PE-CSNet, where all undetermined parameters of the generalized CS model and the BCD solver become the network's trainable weights, denoted by $\Theta$.
The resulting network architecture defines a parameterized mapping $ f_\textrm{PE}(\cdot; \Theta) $, illustrated in Figure \ref{F2} (see Section \ref{S3} for detailed description).
Guided by the principle of EI, the reconstruction mapping $ f_\textrm{PE}(\cdot; \Theta) $ is learned by minimizing the following loss function:
\begin{equation}\label{5}
\mathcal{L}_\textrm{PE}(\Theta) := \mathcal{L}_\textrm{DC}(\Theta) + \beta\mathcal{L}_\textrm{EQ}(\Theta)
\end{equation}
with the data consistency term (DC) and equivariance term (EQ) defined respectively as
\begin{equation}\label{6}
\mathcal{L}_\textrm{DC}(\Theta) := \dfrac{1}{S}\sum_{i=1}^SL(\bds{x}_i, f_\textrm{PE}(\bds{y}_i;\Theta)),
\end{equation}
\begin{equation}\label{7}
\mathcal{L}_\textrm{EQ}(\Theta) := \dfrac{1}{ST}\sum_{i=1}^S\sum_{j=1}^TL(\mathcal{T}_j\bds{x}_i, f_\textrm{PE}(\Phi(\mathcal{T}_j\bds{x}_i); \Theta)).
\end{equation}
Here, $ \mathcal{L}_\textrm{DC} $ enforces data consistency, $ \mathcal{L}_\textrm{EQ} $ enforces system equivariance based on \eqref{3}, and $\beta$ is a trade-off parameter.
In the testing phase, a sparsely sampled measurement is processed by $ f_\textrm{PE} $ to produce a high-quality reconstruction with fast inference speed.
The main contributions of this paper are summarized as follows:
\begin{itemize}
\item A novel deep unrolling network, PE-CSNet, that adaptively learns patch-based sparse representations, a powerful prior widely used in model-driven methods but rarely explored in deep unrolling.
Unlike typical unrolling methods that learn generic transforms, our network explicitly leverages an intrinsic patch-based structure and is derived from a generalized patch-based model.
This formulation allows the patch-based sparsifying transform to be directly and optimally learned from data, rather than being predefined.

\item Equivariant training strategy that exploits the intrinsic patch structure for data efficiency.
We propose an equivariant imaging (EI) training strategy for PE-CSNet.
By leveraging the network's intrinsic patch-based architecture, this strategy enforces consistency across geometric transformations at the patch level.
This leads to highly data-efficient learning, significantly reducing the need for large training datasets.

\item Extensive experiments validate that PE-CSNet achieves promising performance across both compressive sensing magnetic resonance imaging (CS-MRI) and compressive sensing coded diffraction pattern (CS-CDP), outperforming state-of-the-art traditional and deep learning methods.
It achieves robust generalization, even with limited training data.
\end{itemize}

The rest of the paper is organized as follows.
In Section \ref{S2}, we introduce the generalized patch-based CS model and derive its BCD solver.
Section \ref{S3} presents the PE-CSNet architecture, details its stochastic equivariant training strategy along with the network inference procedure, and discusses the convergence of a simplified, parameter-shared version of PE-CSNet.
Note that our experiments use the more expressive stage-specific (non-shared) parameters for PE-CSNet.
Section \ref{S4} evaluates the performance of PE-CSNet on CS-MRI and CS-CDP tasks through extensive experiments.
Finally, some conclusions and remarks are given in Section \ref{S5}.

\begin{figure}[!htbp]
\centering
\includegraphics[width=.66\linewidth]{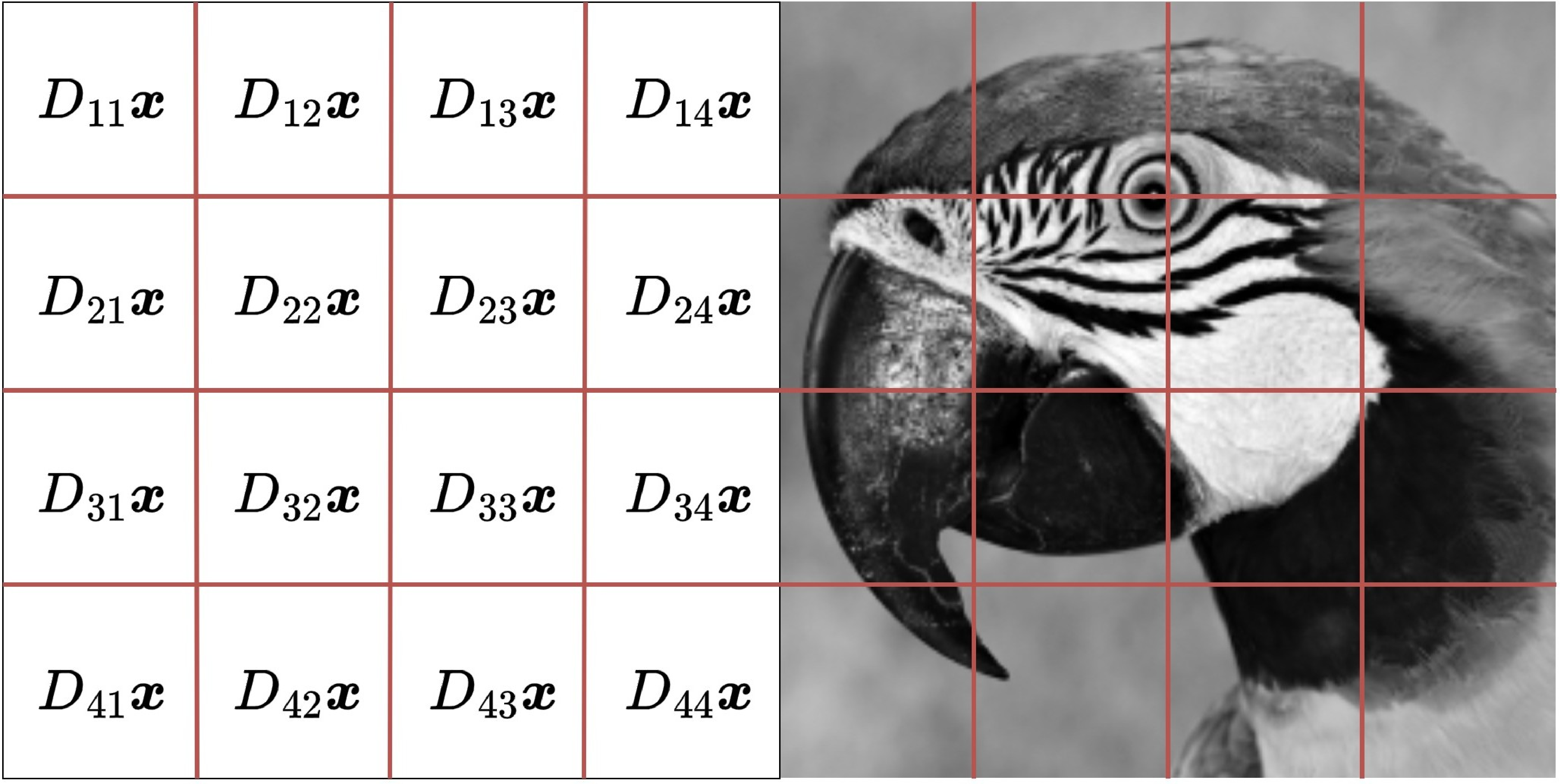}
\caption{Visualization of the patch-extraction operator $ D_{ij} $ for a $ (4\times 4) $ patch grid ($ N_p=4 $).
}\label{F1}
\end{figure}

\section{Generalized patch-based compressive sensing model}\label{S2}
\setcounter{equation}{0}
In this section, we first introduce a patch-based CS model that generalizes many traditional patch-based models.
By introducing auxiliary variables via a penalty function, we apply the BCD algorithm to solve this model.
In Section \ref{S3}, we further derive the corresponding network architecture of PE-CSNet by unrolling and generalizing the BCD algorithm.

\subsection{Model}
While patch-based regularization is widely used in model-driven CS, many such methods can be summarized in the following general form \cite{MJB09, QXG12, QXH14, ZZC16, RYL22}:
\begin{equation}\label{8}
\hat{\bds{x}} = \underset{\bds{x}}{\arg\min}\left\{\dfrac{1}{2}\|\Phi \bds{x}-\bds{y}\|_2^2 + \lambda\sum_{i,j}\|\Psi(D_{ij}\bds{x})\|_1\right\},
\end{equation}
where $ \Psi $ is a (typically predefined) sparsifying transform of the $ (i,j) $-th patch of $ x $ and $\lambda>0$ is the regularization parameter.
The transform $ \Psi $ in \eqref{8} includes common choices such as wavelet transforms \cite{QXG12, QXH14}, learned dictionaries \cite{ZZC16}.
Note that the patch-based CS model \eqref{8} is a special case of \eqref{1} that focuses on patch-based transform sparsity.
To better exploit the a priori information within image patches, we generalize the model in \eqref{8} by replacing the sparsifying transform $ \Psi $ and the $\ell_1$ penalty with a more general regularizer.
The generalized patch-based CS model can be formulated as follows:
\begin{equation}\label{9}
\hat{\bds{x}} = \underset{\bds{x}}{\arg\min}\left\{\dfrac{1}{2}\|\Phi \bds{x}-\bds{y}\|_2^2 + \sum_{i,j}\mathcal{R}(D_{ij}\bds{x})\right\},
\end{equation}
where $\mathcal{R}$ is a (possibly) nonlinear regularization function that enforces sparsity on the extracted patches.
We note that the regularization parameter $\lambda$ is absorbed into the definition of $\mathcal{R}$.
Notably, the previous model \eqref{8} is a special case of our generalized model \eqref{9}, with $\mathcal{R}(\cdot) := \lambda\|\Psi(\cdot)\|_1$.
In contrast to traditional model-driven methods where $\mathcal{R}$ is predefined, our approach determines $\mathcal{R}$ via an end-to-end learning strategy.

\subsection{BCD solver}\label{S22}
The BCD algorithm provides an efficient solver for the generalized patch-based model \eqref{9}, owing to its separable structure across patches.
To apply the BCD algorithm, we first employ the variable splitting and quadratic penalty technique \cite{QXH14, YYS20} to the CS model \eqref{9}.
By introducing an auxiliary variable $ \bds{z} $ in the image domain, we convert \eqref{9} into the constrained form:
\begin{equation}\label{10} \underset{\bds{x},\bds{z}}{\min}\left\{\dfrac{1}{2}\left\|\Phi \bds{x}-\bds{y}\right\|_2^2 + \sum_{i,j}\mathcal{R}(D_{ij}\bds{z})\right\},\quad \textit{s.t.}\; \bds{z} = \bds{x}.
\end{equation}
Applying the quadratic penalty technique, we obtain a relaxed unconstrained version of \eqref{10} as follows:
\begin{equation}\label{11} \mathscr{L}(\bds{x},\bds{z}; \rho) := \dfrac{1}{2}\|\Phi \bds{x}-\bds{y}\|_2^2 + \sum_{i,j}\mathcal{R}(D_{ij}\bds{z}) + \dfrac{\rho}{2}\|\bds{x}-\bds{z}\|_2^2,
\end{equation}
where $ \rho>0 $ is the penalty factor.
To minimize \eqref{11}, we use the BCD, which alternately updates $ \bds{x} $ and $ \bds{z} $ by solving the following subproblems:
\begin{equation}\label{12}
\left\{ \begin{array}{ll}
\underset{\bds{x}}{\arg\min}\;\dfrac{1}{2}\|\Phi \bds{x} - \bds{y}\|^2_2 +  \dfrac{\rho}{2}\|\bds{x} - \bds{z}\|^2_2,\\
\underset{\bds{z}}{\arg\min}\;\sum_{i,j}\mathcal{R}(D_{ij}\bds{z}) + \dfrac{\rho}{2}\|\bds{x}-\bds{z}\|_2^2.
\end{array} \right.
\end{equation}
Since $ \|\bds{x}-\bds{z}\|_2^2 = \sum_{i,j}\|D_{ij}\bds{x} - D_{ij}\bds{z}\|_2^2 $, the $ \bds{z} $-subproblem above can be decoupled as follows:
\begin{equation*}
\underset{D_{ij}\bds{z}}{\arg\min}\;\mathcal{R}(D_{ij}\bds{z}) + \dfrac{\rho}{2}\|D_{ij}\bds{x}-D_{ij}\bds{z}\|_2^2,\quad \forall i,j.
\end{equation*}
Let $ I $ denote the identity operator, with the dimension understood from context.
Then, the subproblems of \eqref{12} admit the following closed-form updates:
\begin{equation}\label{13}
\left\{ \begin{array}{ll}
\bds{X}^{(n)}: \bds{x}^{(n)} = (\Phi^H\Phi + \rho I)^{-1}[\Phi^H\bds{y} + \rho \bds{z}^{(n-1)}],\\
\cdots\cdots\\
\bds{Z}^{(n)}_{ij}:D_{ij}\bds{z}^{(n)} \in \mathrm{prox}_{(1/\rho)\mathcal{R}}(D_{ij}\bds{x}^{(n)}),\\
\cdots\cdots
\end{array} \right.
\end{equation}
where $ \mathrm{prox}_{(1/\rho)\mathcal{R}}(\cdot) $ is the proximal operator of $ \mathcal{R} $ with parameter $ 1/\rho $, and the superscript $ H $ denotes Hermitian transpose.
Here, $ \bds{z}^{(n)} $ can be obtained by reassembling its patches $ D_{ij}\bds{z}^{(n)} $, i.e., $ \bds{z}^{(n)} = \mathtt{Reassemble}(\{D_{ij}\bds{z}^{(n)}\}) $ (see Section \ref{S1}).
Note that the regularization function $\mathcal{R}$ in \eqref{9} encodes the patch-based sparse a priori information, which is unknown and difficult to choose in advance.
Consequently, its proximal operator $ \mathrm{prox}_{(1/\rho)\mathcal{R}} $ lacks an explicit, implementable form.
To address this, we approximate the proximal operator via a deep neural network, which is achieved by unrolling and generalizing the BCD iterations derived above into a learnable architecture, as detailed in the next section.

\begin{remark}\label{R4}
Suppose that for each $ \rho^{(k)} $, the pair $ (\bds{x}_k, \bds{z}_k) $ is an exact minimizer of $ \mathscr{L}(\bds{x},\bds{z}; \rho^{(k)}) $ defined in \eqref{11}, and that $ \rho^{(k)}\to\infty $.
It is known from \cite[Chapter 17.1]{NJW06} that every limit point of $ \{(\bds{x}_k, \bds{z}_k)\} $ is a solution to \eqref{10}.
Therefore, solving \eqref{11} with sufficiently large $\rho$ provides a reasonable approximation to the exact solution of \eqref{10}.
This justifies our use of \eqref{11}.
\end{remark}

\begin{remark}\label{R1}
The update for $ \bds{x} $ in \eqref{13} requires solving a linear system involving $ (\Phi^H\Phi + \rho I) $, which can be computationally expensive for general measurement matrices.
In this work, we focus on structurally random matrices (SRMs) \cite{DTT12}---such as partial Fourier matrices or coded diffraction patterns (CDPs) \cite{LHL25}---which are widely used in CS.
For these SRMs, the linear system in \eqref{13} can be solved efficiently using a closed-form solution \cite{YYS20}, which we use in our implementation.
\end{remark}

\section{PE-CSNet for CS imaging}\label{S3}
\setcounter{equation}{0}

The practical deployment of the BCD solver \eqref{13} faces three major challenges: (i) despite having theoretical guarantees, it often requires numerous iterations to converge; (ii) the design of an effective sparsity regularizer $\mathcal{R}$ and its proximal operator is highly challenging; and (iii) the penalty parameter $\rho$ lacks a principled setting and requires extensive tuning.
To address these issues, we unroll the BCD iterations into a deep architecture, termed PE-CSNet, where the proximal operator and algorithm parameters are learned automatically from data, replacing handcrafted designs.

A common implementation practice is to specify a maximum number of iterations $ N $ in advance, which controls computational effort while allowing the iterative process to converge towards a satisfactory reconstruction.
Starting from $ \bds{x}^{(0)} := \Phi^H\bds{y} $, the BCD iteration \eqref{13} with $ N $ steps naturally defines a nonlinear mapping from undersampled measurements $ \bds{y} $ to the reconstruction $ \bds{x}^{(N)} $.
This motivates us to unroll this mapping into a deep architecture, PE-CSNet, by replacing the mathematical operations in each BCD iteration with learnable neural modules.
To be more specific, the updates $ \bds{X}^{(n)} $ and $ \bds{Z}^{(n)}_{ij} $ in \eqref{13} are implemented by modules $ (\bds{X}^{(n)}) $ and $ (\bds{Z}^{(n)}_{ij}) $, respectively.
As shown in Figure \ref{F2} (top), the resulting network's data flow mirrors the BCD iterative process, with each network stage corresponding to a single BCD step.
See Algorithm \ref{PE-CSNet} for the pseudo code of our PE-CSNet.
The architecture of these two modules is detailed as follows.

\begin{figure}[!htbp]
\centering
\includegraphics[width=\linewidth]{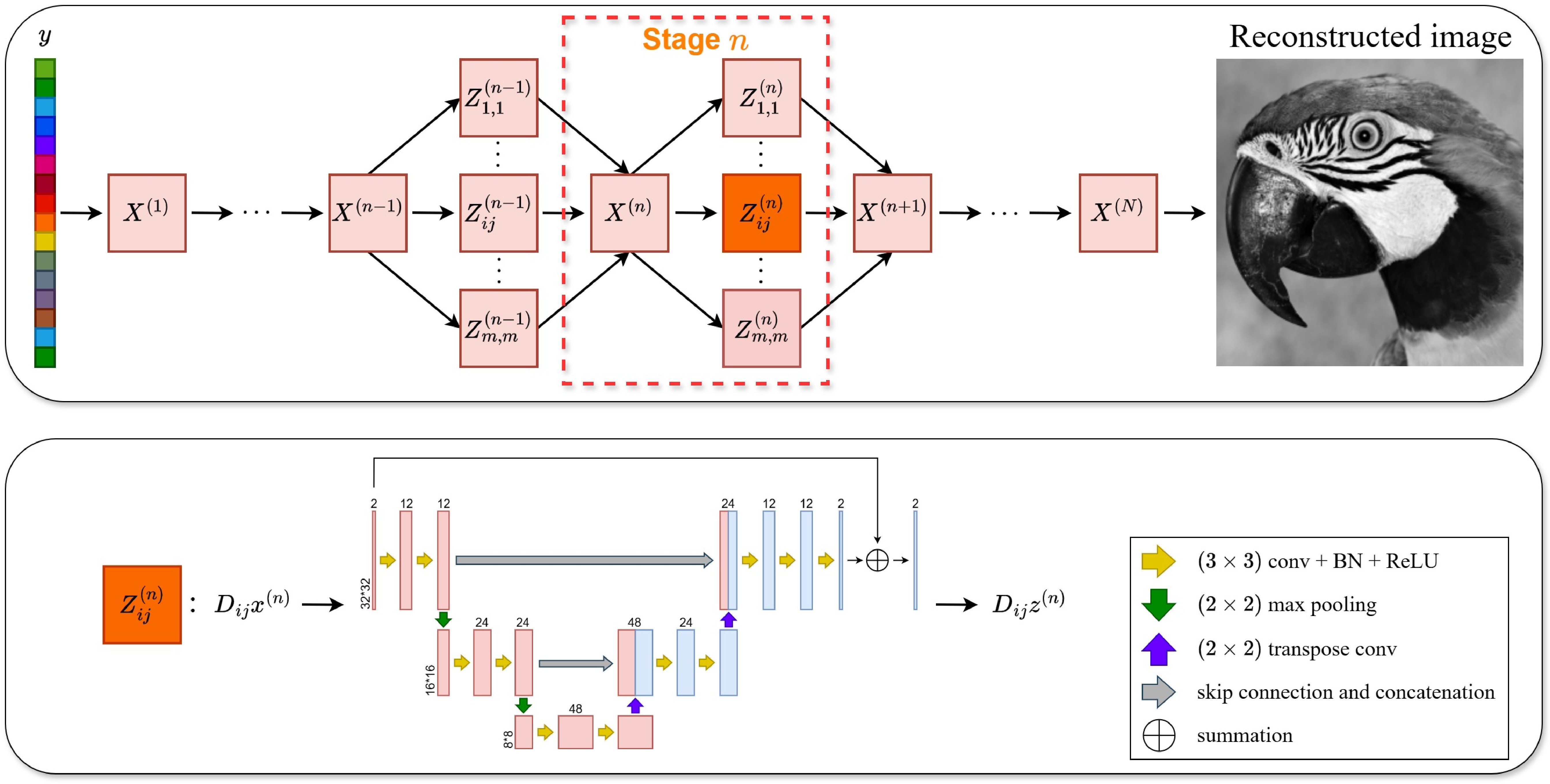}
\caption{The architecture of PE-CSNet.
The top part shows the overall unrolled architecture, which alternates between the reconstruction module $ (\bds{X}^{(n)}) $ and the auxiliary patch update module $ (\bds{Z}^{(n)}_{ij}) $.
The bottom part provides a detailed view of the residual U-Net that implements $ (\bds{Z}^{(n)}_{ij}) $.
}\label{F2}
\end{figure}

\textit{Reconstruction module $ (\bds{X}^{(n)}) $}.
This module implements the $ \bds{x} $-update in \eqref{13}.
Given the auxiliary variable $ \bds{z}^{(n-1)} $ from the previous stage, it computes the updated reconstruction $ \bds{x}^{(n)} $.
The output of $ (\bds{X}^{(n)}) $ is given by:
\begin{equation}\label{14}
\bds{x}^{(n)} := \left(\Phi^H\Phi + \rho^{(n)}I\right)^{-1}\left(\Phi^H \bds{y} + \rho^{(n)}\bds{z}^{(n-1)}\right),
\end{equation}
where $ \rho^{(n)}>0 $ is a learnable parameter at the $ n $-th stage.
While traditional penalty methods typically use a predefined increasing sequence $ \{\rho^{(k)}\} $ (see Remark \ref{R4}), in the proposed PE-CSNet, $ \rho^{(n)} $ becomes a learnable parameter that adapts to each stage through end-to-end training.
The resulting $ \bds{x}^{(n)} $ is then passed to the subsequent auxiliary patch update modules $(\bds{Z}_{ij}^{(n)})$ within the same stage for further processing.
Recall that the initial stage ($ n=0 $) is initialized with $ \bds{x}^{(0)} := \Phi^H\bds{y} $.
For structurally random matrices (e.g., partial Fourier matrices or CDPs), the matrix inversion in \eqref{14} can be computed efficiently using a closed-form solution, as discussed in Remark \ref{R1}.

\textit{Auxiliary patch update module $ (\bds{Z}^{(n)}_{ij}) $}.
This module implements the update of $ D_{ij}\bds{z} $ in \eqref{13}, which corresponds to the $ (i,j) $-th patch of the auxiliary variable $ \bds{z}^{(n)} $.
Since this module enforces the patch-based transform sparsity, which is crucial for high-quality CS recovery, we implement it using a residual U-Net, a powerful architecture for learning structured image representations.
Our variant (Figure \ref{F2}, bottom) builds upon the classical U-Net \cite{ROF15} and is similar to the design in \cite{LZZ24}, but is tailored for patch-based sparse modeling.
This module takes the patch $ D_{ij}\bds{x}^{(n)} $ as input and produces the updated patch representation $ D_{ij}\bds{z}^{(n)} $.
The output of $ (\bds{Z}^{(n)}_{ij}) $ is given by:
\begin{equation}\label{15}
D_{ij}\bds{z}^{(n)} := \mathcal{U}_{\alpha,\bds{\theta}}^{(n)}\left(\mathcal{P}_r\left(D_{ij}\bds{x}^{(n)}\right)\right),
\end{equation}
where $ \mathcal{P}_r $ is a fixed projection operator (with a predefined radius $ r $, see Subsection \ref{S421}) that ensures the input to the subsequent network $ \mathcal{U}_{\alpha,\bds{\theta}}^{(n)} $ remains bounded:
\begin{equation}\label{19}
\mathcal{P}_r(\bds{x}) := \left\{ \begin{array}{ll}
r\dfrac{\bds{x}}{\|\bds{x}\|_2},\quad &\|\bds{x}\|_2> r,\\
\bds{x},\quad &\|\bds{x}\|_2\leq r.
\end{array} \right.
\end{equation}
Here, $ \mathcal{U}_{\alpha,\bds{\theta}}^{(n)} $ is a learnable residual network that approximates the proximal operator $ \mathrm{prox}_{f^{(n)}} $ with $ f^{(n)}:= \left(1/\rho^{(n)}\right)\mathcal{R}^{(n)} $.
It serves as the learnable module and is constructed as
\begin{equation}\label{20}
\mathcal{U}_{\alpha,\bds{\theta}}^{(n)} := I + \alpha^{(n)}\mathcal{U}(\cdot;\bds{\theta}^{(n)}),
\end{equation}
in which $ \mathcal{U}(\cdot;\bds{\theta}^{(n)}) $ is a convolutional neural network (a U-Net, see Figure \ref{F2}, bottom), and
the parameters $ \alpha^{(n)} $ and $ \bds{\theta}^{(n)} $ are learned end-to-end.
As described in Section \ref{S22}, the full auxiliary variable is then reconstructed by $\bds{z}^{(n)} = \mathtt{Reassemble}(\{D_{ij}\bds{z}^{(n)}\}) $, where the updated patches $ \{D_{ij}\bds{z}^{(n)}\} $ are computed by \eqref{15} (see Section \ref{S1} for the definition of $ \mathtt{Reassemble}(\cdot) $).
The reconstructed $ \bds{z}^{(n)} $ subsequently serves as the input to the reconstruction module $ (\bds{X}^{(n+1)}) $ in the next stage, thereby completing one full BCD iteration within the unrolled network.
The detailed structure of the residual U-Net $ \mathcal{U}_{\alpha,\bds{\theta}}^{(n)} $, which is the core of $ (\bds{Z}^{(n)}_{ij}) $, is shown at the bottom of Figure \ref{F2}.
Each red and blue item represents a multichannel feature map; the number of channels is shown at the top of the volume, and the length and width are indicated at the lower-left.
Arrows denote different operations, as explained in the legend.
For more details of U-Net, see \cite{ROF15, LZZ24}.

In summary, PE-CSNet is constructed by unrolling the BCD iterations \eqref{13} of the generalized patch-based model \eqref{9}.
The overall network, denoted as $ f_\mathrm{PE}(\cdot;\Theta) $, is parameterized by $\Theta$, which collects all learnable parameters across its $ N $ stages.
These parameters consist of the penalty coefficients $ \{\rho^{(n)}\}_{n=1}^N $ in the reconstruction modules $ \{(\bds{X}^{(n)})\}_{n=1}^{N} $ and the weights $ \{\alpha^{(n)}, \bds{\theta}^{(n)}\}_{n=0}^{N-1} $ of the residual U-Nets within the patch update modules $ \{(\bds{Z}^{(n)})\}_{n=0}^{N-1} $.
As noted earlier, the reconstruction module $ (\bds{X}^{(0)}) $ is an initialization defined by $ \bds{x}^{(0)} := \Phi^H\bds{y} $.
Assuming each network $ \mathcal{U}(\cdot;\bds{\theta}^{(n)}) $ has $ N_{\bds{\theta}} $ parameters, the total number of trainable parameters of PE-CSNet $ f_\mathrm{PE}(\cdot;\Theta) $ is $ N\times (2+N_{\bds{\theta}}) $.
It should be noted that PE-CSNet retains the algorithmic structure of the traditional BCD solver but replaces the handcrafted sparse a priori information with a learnable residual U-Net.
This allows the sparse representation to be learned adaptively from data rather than predefined.

\subsection{Stochastic equivariant training strategy and network inference}\label{S31}
The parameters $ \Theta $ of PE-CSNet $ f_\mathrm{PE}(\cdot;\Theta) $ are trained end-to-end to reconstruct the ground truth image $ \bds{x} $ from its undersampled measurement $ \bds{y} $.
Given a dataset $ \mathcal{S} := \{(\bds{y}_i,\bds{x}_i)\}_{i=1}^S $, we adopt an equivariant learning strategy to improve data efficiency.
This strategy exploits the invariance of the ground truth image space $ \mathcal{X} $ under a predefined set of transformations.
We construct this transformation set as the composition of two fundamental types: discrete rotations/flips and limited translations.
First, we define a transformation group $ G := \{g_i\}_{i=1}^{|G|} $ generated by $ 90^\circ $ rotations and horizontal flips, which has $ |G|=8 $ elements.
Second, we define a set of translations $ L := \{l_i\}_{i=1}^{|L|} $, where each $ l_i $ represents a 2D translation by integer distances $ d^{(i)}_x $ and $ d^{(i)}_y $ in the horizontal and vertical directions, respectively.
To ensure the translation remains within a local patch of side length $ m $ (i.e., the same patch size as defined in Section \ref{S1}), $ d^{(i)}_x $ and $ d^{(i)}_y $ are restricted to:
\begin{equation*}
d^{(i)}_x, d^{(i)}_y\in \left[-m+\lfloor \dfrac{m}{2}\rfloor, \lfloor \dfrac{m}{2}\rfloor-1\right]\cap \mathbb{Z},
\end{equation*}
where $ \lfloor \cdot\rfloor $ denotes the floor function.
This yields $ |L| = m^2 $ distinct translations.
Any blank regions produced by translation are padded via reflection.
The full transformation set is then obtained by composing every translation with every transformation from $ G $, i.e., $ \{l_i\circ g_j: l_i\in L,\; g_j\in G\} $.
We denote this composed set simply as $ \{\mathcal{T}_i\}_{i=1}^T $, where $ T = |G|\times |L| = 8m^2 $.
As discussed in Section \ref{S1}, the equivariant training strategy uses the loss function defined in \eqref{5}:
\begin{equation}\label{16}
\mathcal{L}_\textrm{PE}(\Theta) = \mathcal{L}_\textrm{DC}(\Theta) +\beta\mathcal{L}_\textrm{EQ}(\Theta),
\end{equation}
where $ \mathcal{L}_\textrm{DC} $ enforces data consistency (DC) and $ \mathcal{L}_\textrm{EQ} $ serves as the equivariant regularization term for $ f_\textrm{PE}(\cdot;\Theta) $, as given in \eqref{6} and \eqref{7}, respectively.
Here, $ L(\cdot) := ||\cdot||^2_2 $ in both terms, and $ \beta>0 $ is a trade-off parameter.
Given that each ground truth image $ \bds{x}_i\; (i = 1,\ldots,S) $ is of size $ (M\times M) $ (i.e., the image dimension defined in Section~\ref{S1}), the proposed training strategy significantly increases the number of effective samples for learning the auxiliary patch update module $ \mathcal{U}_{\alpha,\bds{\theta}}^{(n)} $.
Specifically, each image is divided into $ (M/m)^2 $ patches, and each patch is further transformed by the $ T = 8m^2 $ operations in the set $ \{\mathcal{T}_i\}_{i=1}^T $.
Consequently, the effective number of training samples for this module is scaled by a factor of $ T\times (M/m)^2 = 8\times M^2 $.
In our numerical experiments (see Section \ref{S4}), we set $ M=256 $, which yields a scaling factor $ 8\times 256^2 = 524,288 $.
Although this introduces substantial redundancy within a single image, training PE-CSNet with $ \mathcal{L}_\textrm{PE} $ in \eqref{16} enables the network to exploit patch-based a priori information comprehensively, thereby enhancing its sparse regularization capability.
However, computing the full loss $ \mathcal{L}_{\mathrm{PE}} $ exactly is computationally expensive because it requires evaluating the network over all $ T $ transformations for each sample.
Moreover, the resulting heavy redundancy among these transformed patches can degrade learning efficiency of PE-CSNet.
This remains true even with a minimal batch size of $ 1 $.
To tackle this, we introduce a stochastic training scheme.
Instead of evaluating the full equivariant loss $ \mathcal{L}_\textrm{EQ}(\Theta) $ in \eqref{7}, in each iteration we approximate it by an unbiased estimator:
\begin{equation}\label{17}
\widetilde{\mathcal{L}}_\textrm{EQ}(\Theta) := \dfrac{1}{S\widetilde{T}}\sum_{i=1}^S\sum_{k=1}^{\widetilde{T}}L\left(\mathcal{T}_{j^k}\bds{x}_i, f_\textrm{PE}(\Phi(\mathcal{T}_{j^k}\bds{x}_i); \Theta)\right),
\end{equation}
where the indices $ \{j^k\}_{k=1}^{\widetilde{T}} $ are drawn uniformly at random from $ \{1,2,\ldots,T\} $.
By construction, $ \mathbb{E}\left[\widetilde{\mathcal{L}}_\textrm{EQ}\right] = \mathcal{L}_\textrm{EQ} $.
This leads to a stochastic version of the total loss:
\begin{equation}\label{18}
\widetilde{\mathcal{L}}_\textrm{PE}(\Theta) = \mathcal{L}_\textrm{DC}(\Theta) + \beta\widetilde{\mathcal{L}}_\textrm{EQ},
\end{equation}
which is also an unbiased estimator of $ \mathcal{L}_\textrm{PE} $, i.e., $ \mathbb{E}\left[\widetilde{\mathcal{L}}_\textrm{PE}\right] = \mathcal{L}_\textrm{PE} $.
The network is then updated using the gradient of this stochastic loss, thereby reducing the computational cost in each iteration from $ O(T) $ to $ O(\widetilde{T}) $.
For the optimization itself, we employ the Adam optimizer \cite{KDP14} to train PE-CSNet $ f_\textrm{PE}(\cdot;\Theta) $ on the dataset $\mathcal{S} = \{(\bds{y}_i,\bds{x}_i)\}_{i=1}^S$, initializing the weights with Xavier scheme \cite{GXB10} (see Subsection \ref{S421} for more training details).
After $ t $ training epochs, we obtain the final model $ f_\textrm{PE}(\cdot;\widehat{\Theta}) $.
For reconstruction, given undersampled measurement $ \bds{y} $, the corresponding ground truth image is recovered via Algorithm \ref{PE-CSNet}.
This algorithm describes the forward pass of the trained PE-CSNet $ f_\textrm{PE}(\cdot;\widehat{\Theta}) $.
Its structure follows a natural data flow: within each iteration $ n $, the current approximation $ \bds{x}^{(n)} $ is first processed by the auxiliary patch update module $ (\bds{Z}^{(n)}_{ij})\; (1\leq i,j\leq N_p) $ (using parameters $ \alpha^{(n)}, \bds{\theta}^{(n)} $) to produce the auxiliary $ \bds{z}^{(n)} $; this is then used to compute the next approximation $ \bds{x}^{(n+1)} $ via the reconstruction module $ (\bds{X}^{(n+1)}) $ (using the penalty parameter $ \rho^{(n+1)} $).
Here, $ N_p $ is the number of patches per image side ($ N_p = M/m $), consistent with its definition in Section~\ref{S1}.
See Figure \ref{F2} for visualizing the architecture of PE-CSNet.

\begin{algorithm}[htbp]
\caption{Inference of PE-CSNet}\label{PE-CSNet}

\textbf{Input: }$ \Phi $, $\bds{y}$, $\mathcal{P}_r$, $N$, $ N_p $, $\widehat{\Theta} = \{\alpha^{(n)}, \bds{\theta}^{(n)}, \rho^{(n+1)}\}_{n=0}^{N-1}$ \Comment{$ N_p $: side length of the patch grid}

\textbf{Output:} an approximation for the ground truth image $\bds{x}$

\textbf{Initialize:} $n = 0$, $\bds{x}^{(0)} := \Phi^H\bds{y}$ \Comment{$D_{ij}$ are predefined for $1\leq i,j\leq N_p$}

\begin{algorithmic}[1]
\item \textbf{while} $n<N$ \textbf{do}
\State \quad Compute $ \mathcal{U}_{\alpha,\bds{\theta}}^{(n)} = I + \alpha^{(n)}\mathcal{U}(\cdot;\bds{\theta}^{(n)}) $
\State \quad $ D_{ij}\bds{z}^{(n)} = \mathcal{U}_{\alpha,\bds{\theta}}^{(n)}\left(\mathcal{P}_r\left(D_{ij}\bds{x}^{(n)}\right)\right) $ \Comment{auxiliary patch update module $ (\bds{Z}^{(n)}_{ij}) $}
\State \quad $\bds{z}^{(n)} = \mathtt{Reassemble}(\{D_{ij}\bds{z}^{(n)}\}) $
\State \quad $\bds{x}^{(n+1)} = \left(\Phi^H\Phi + \rho^{(n+1)}I\right)^{-1}\left(\Phi^H \bds{y} + \rho^{(n+1)}\bds{z}^{(n)}\right) $ \Comment{reconstruction module $ (\bds{X}^{(n+1)}) $}
\State \quad $ n\gets n+1 $
\item \textbf{end while}
\item Set final approximation to be $\bds{x}^{(N)}$.
\end{algorithmic}
\end{algorithm}

\subsection{Convergence}
This subsection discusses the convergence of a simplified, parameter-shared version of PE-CSNet.
In this version, the network parameters $\bds{\theta}^{(n)}$ in Algorithm \ref{PE-CSNet} are equal across all stages; that is, $ \bds{\theta}^{(n)} = \bds{\theta} $ for all $ n $.
The scaling parameters $ \{\alpha^{(n)}\} $ and penalty parameters $ \{\rho^{(n)}\} $ remain stage-dependent.
For the purpose of establishing a well-posed convergence theory, we regard the resulting iteration of the simplified PE-CSNet as an infinite process $ (n=0,1,2,\ldots) $ in this subsection.
We emphasize that in our practical implementation and experiments, we use the full (non-shared) version of PE-CSNet with finite depth (i.e., Algorithm \ref{PE-CSNet}), which leads to better performance, as demonstrated in Subsection \ref{S44}.
The convergence result relies on the following assumptions.

\begin{assumption}[Lipschitz continuity of the shared network]\label{A1}
In the simplified (parameter-shared) version, the learned network $ \mathcal{U}(\cdot;\bds{\theta}) $ is Lipschitz continuous; i.e., there exists a constant $ 0<L<\infty $ such that
\begin{displaymath}
\|\mathcal{U}(\bds{x};\bds{\theta}) - \mathcal{U}(\bds{y};\bds{\theta})\|_2 \leq L\|\bds{x} - \bds{y}\|_2,\quad \forall \bds{x}, \bds{y}.
\end{displaymath}
\end{assumption}

\begin{remark}\label{R2}
The Lipschitz condition in Assumption \ref{A1} is standard.
It is justified because the network $ \mathcal{U}(\cdot;\bds{\theta}) $ is a finite-depth U-Net composed of layers with bounded operators (e.g., convolutional weights, $ 1 $-Lipschitz activation, and fixed batch normalization).
Since the learned parameter $ \bds{\theta} $ is fixed and bounded, the operator norms of all layers are bounded, ensuring the overall network is Lipschitz continuous.
\end{remark}

\begin{assumption}\label{A2}
In the simplified (parameter-shared) version of PE-CSNet, the learned parameters $ \alpha^{(n)}\; (n = 0,1,2,\ldots) $ and $\rho^{(n)}\; (n = 1,2,3,\ldots) $ satisfy:
\begin{displaymath}
\alpha^{(0)}\ge1,\quad \rho^{(1)}\ge1,\quad \rho^{(n+1)}\geq \gamma \rho^{(n)},\quad \alpha^{(n)}\leq \dfrac{1}{\rho^{(n)}},\qquad n = 1,2,3,\ldots,
\end{displaymath}
where $\gamma>1$ is a constant.
\end{assumption}

\begin{remark}\label{R3}
The conditions in Assumption \ref{A2} are mild and can be enforced during training, for instance, by projecting the parameters onto the prescribed intervals.
Since we use the more expressive stage-specific (non-shared) parameters in experiments, we do not enforce these projection constraints.
This assumption is introduced for the convergence analysis of the simplified, parameter-shared version of PE-CSNet.
\end{remark}

Based on the above assumptions, our main result (i.e., Theorem \ref{thm}) is that the simplified, parameter-shared version of PE-CSNet, viewed as an iterative algorithm, converges linearly to a fixed point.
The detailed proof is provided in Appendix~\ref{A}.
Note that the convergence discussed here refers to the network's iterative output (i.e., $ \bds{x}^{(n)} $ in Algorithm \ref{PE-CSNet}), not to the training of network parameters.

\begin{theorem}\label{thm}
Under Assumptions \ref{A1} and \ref{A2} (where $\gamma>1$), the infinite sequence $\{\bds{x}^{(n)}\}_{n=0}^{\infty}$ generated by the simplified, parameter-shared version of PE-CSNet (i.e., $ \bds{\theta}^{(n)} = \bds{\theta} $ for all $ n $ in Algorithm \ref{PE-CSNet}) converges linearly to a point $ \bds{x}^* $; that is, the convergence error satisfies
\begin{equation*}
\|\bds{x}^{(n)} - \bds{x}^*\|_2\leq C\gamma^{-n},
\end{equation*}
where $ C $ is a positive constant independent of $ n $.
\end{theorem}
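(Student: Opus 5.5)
Our plan is to show that consecutive iterates differ by a geometrically decaying amount, $\|\bds{x}^{(n+1)}-\bds{x}^{(n)}\|_2\le C'\gamma^{-n}$. This makes $\{\bds{x}^{(n)}\}$ a Cauchy sequence, and summing the tail of the geometric series then gives $\|\bds{x}^{(n)}-\bds{x}^*\|_2\le \sum_{k\ge n}C'\gamma^{-k}=\frac{C'\gamma}{\gamma-1}\gamma^{-n}$. Throughout we write $A_n:=(\Phi^H\Phi+\rho^{(n)}I)^{-1}$. Since $\Phi^H\Phi\succeq 0$, we have $\|A_n\|_2\le 1/\rho^{(n)}$ and $\|\rho^{(n)}A_n\|_2\le 1$.

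\textbf{Step 1: boundedness of $\bds{z}^{(n)}$.} The projection $\mathcal{P}_r$ guarantees $\|\mathcal{P}_r(D_{ij}\bds{x}^{(n)})\|_2\le r$. By Assumption \ref{A1}, $\|\mathcal{U}(\bds{v};\bds{\theta})\|_2\le \|\mathcal{U}(0;\bds{\theta})\|_2+Lr=:K$ on that ball. Hence each patch satisfies $\|D_{ij}\bds{z}^{(n)}\|_2\le r+|\alpha^{(n)}|K$. We will use that $\alpha^{(n)}\le 1/\rho^{(n)}\le 1$ for $n\ge 1$; here we read $\alpha^{(n)}$ as nonnegative, and $\rho^{(n)}\ge 1$ follows by induction from $\rho^{(1)}\ge 1$ and $\gamma>1$. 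Reassembling, $\|\bds{z}^{(n)}\|_2\le B$ uniformly in $n\ge1$, where $B$ depends on $N_p,r,K$. The single stage $n=0$ is bounded separately; it only enters through the constant.

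\textbf{Step 2: the difference estimate.} We write
\begin{equation*}
\bds{x}^{(n+1)} = A_{n+1}\Phi^H\bds{y} + \rho^{(n+1)}A_{n+1}\bds{z}^{(n)}.
\end{equation*}
We plan to split $\bds{z}^{(n)}=\bds{p}^{(n)}+\alpha^{(n)}\bds{u}^{(n)}$. Here $\bds{p}^{(n)}:=\mathtt{Reassemble}(\{\mathcal{P}_r(D_{ij}\bds{x}^{(n)})\})$ and $\bds{u}^{(n)}$ collects the U-Net outputs, so that $\|\alpha^{(n)}\bds{u}^{(n)}\|_2\le N_pK/\rho^{(n)}\lesssim\gamma^{-n}$. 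Thus the network correction already decays geometrically. The terms $A_{n+1}\Phi^H\bds{y}$ and the prefactor $\rho^{(n+1)}A_{n+1}$ must also be shown to change by $O(\gamma^{-n})$. For this we use the resolvent identity $A_{n+1}-A_n=(\rho^{(n)}-\rho^{(n+1)})A_{n+1}A_n$. Diagonalizing $\Phi^H\Phi$ with eigenvalues $\sigma\ge0$, the factor $\rho/(\sigma+\rho)$ equals $1-\sigma/(\sigma+\rho)$, which lies within $\sigma/\rho$ of $1$. Hence $\rho^{(n)}A_n=I+O(1/\rho^{(n)})$ in operator norm, with constant $\|\Phi^H\Phi\|_2$, and $\|A_n\Phi^H\bds{y}\|_2\le\|\Phi^H\bds{y}\|_2/\rho^{(n)}$. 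Consequently
\begin{equation*}
\bds{x}^{(n+1)}=\bds{z}^{(n)}+O(\gamma^{-n}) = \bds{p}^{(n)} + O(\gamma^{-n}),
\end{equation*}
using $1/\rho^{(n)}\le\gamma^{-(n-1)}$ together with the boundedness from Step 1.

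\textbf{Step 3 (main obstacle): the projection part.} It remains to control $\bds{p}^{(n)}-\bds{p}^{(n-1)}$. Note that $\mathcal{P}_r$ is the metric projection onto a convex ball, so it is nonexpansive patchwise, and therefore $\bds{p}$ is a $1$-Lipschitz function of $\bds{x}$. Step 2 then yields a recursion of the form $e_{n+1}\le e_n+c\gamma^{-n}$ for $e_n=\|\bds{x}^{(n+1)}-\bds{x}^{(n)}\|_2$. This gives only boundedness, not decay, and is the delicate point.

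We will first try to exploit the idempotence $\mathcal{P}_r\circ\mathcal{P}_r=\mathcal{P}_r$. Since $\bds{x}^{(n+1)}=\bds{p}^{(n)}+\bds{\varepsilon}_n$ with $\|\bds{\varepsilon}_n\|_2\le c\gamma^{-n}$, we get
\begin{equation*}
\bds{p}^{(n+1)}=\mathtt{Reassemble}\bigl(\{\mathcal{P}_r(D_{ij}\bds{p}^{(n)}+D_{ij}\bds{\varepsilon}_n)\}\bigr).
\end{equation*}
By nonexpansiveness this is within $c\gamma^{-n}$ of $\mathtt{Reassemble}(\{\mathcal{P}_r D_{ij}\bds{p}^{(n)}\})=\bds{p}^{(n)}$, because each patch of $\bds{p}^{(n)}$ already lies in the ball. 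Hence $\|\bds{p}^{(n+1)}-\bds{p}^{(n)}\|_2\le c\gamma^{-n}$. Combined with Step 2, this gives $\|\bds{x}^{(n+2)}-\bds{x}^{(n+1)}\|_2\le C'\gamma^{-n}$, which closes the argument. The limit is $\bds{x}^*=\lim_n\bds{p}^{(n)}$, and all constants are collected into $C$; the early stages $n=0,1$ are absorbed into $C$.
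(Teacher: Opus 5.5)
Your argument is correct and follows essentially the paper's route: your $\bds{p}^{(n)}$ is the paper's $\bds{v}^{(n)}$, and both proofs show $\bds{x}^{(n+1)}-\bds{z}^{(n)}=O(1/\rho^{(n+1)})$ and $\bds{z}^{(n)}-\bds{v}^{(n)}=O(\alpha^{(n)})$, then use that $\bds{v}^{(n)}$ lies patchwise in the ball of radius $r$ to show the projection step costs only $O(1/\rho^{(n)})$. The differences are minor: you bound the increment $\|\bds{p}^{(n+1)}-\bds{p}^{(n)}\|_2$ via idempotence plus nonexpansiveness of $\mathcal{P}_r$, where the paper bounds the residual $\|\bds{v}^{(n)}-\bds{x}^{(n)}\|_2$ via the closest-point property; you obtain $\bds{x}^{(n+1)}-\bds{z}^{(n)}$ from $\|\rho^{(n)}A_n-I\|_2\le\|\Phi^H\Phi\|_2/\rho^{(n)}$ instead of the optimality condition; and your explicit reading of $\alpha^{(n)}\ge 0$ makes precise an assumption the paper uses tacitly.
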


\begin{remark}\label{R5}\rm
Theorem \ref{thm} guarantees convergence of the sequence generated by the simplified, parameter-shared PE-CSNet to a fixed point.
Whether this fixed point coincides with a critical point of the original model \eqref{9} depends on properties of the learned network $ \mathcal{U}(\cdot;\bds{\theta}) $ that are not assumed here.
\end{remark}

\begin{remark}\label{R6}\rm
Theorem \ref{thm} establishes that there exists a (parameter-shared) instance of the proposed PE-CSNet architecture (i.e., Algorithm \ref{PE-CSNet}) whose iterations converge linearly.
This demonstrates that PE-CSNet itself is inherently capable of producing convergent sequences, providing a theoretical basis for its practical stability.
The empirical convergence results presented in Subsections \ref{S421} and \ref{S431} provide support for this theoretical insight.
For the comparison of PE-CSNet and its parameter-shared version, see Subsection \ref{S44}.
\end{remark}

\section{Numerical experiments}\label{S4}
\setcounter{equation}{0}

This section presents comprehensive numerical experiments to demonstrate the effectiveness and data efficiency of the proposed PE-CSNet.
We begin by detailing the experimental setup in Subsection~\ref{S41}.
Then, in Subsections~\ref{S42} and~\ref{S43}, we evaluate PE-CSNet for compressive sensing MRI (CS-MRI) and compressive sensing coded diffraction patterns (CS-CDP), respectively.
The results show that the proposed PE-CSNet outperforms state-of-the-art CS methods in reconstruction quality and demonstrates stable empirical convergence, while maintaining fast computational speed.
Notably, with only 100 training samples, PE-CSNet already achieves competitive performance, highlighting its data efficiency (see Subsections~\ref{S42} and~\ref{S43}).
Ablation studies (Subsection~\ref{S44}) verify the importance of its stage-specific parameters and patch-based a priori information.

\subsection{Experimental setup}\label{S41}

All deep learning models, including PE-CSNet and the compared deep-learning baselines, are implemented in PyTorch and trained on the same NVIDIA A100 GPU under Linux.
The traditional (non-learning) algorithms are executed on a CPU (Intel Core i7-10700 (2.90GHz), 32 GB RAM, Windows 11).

We adopt three widely used metrics to quantify CS reconstruction quality: the normalized root mean square error (NRMSE), the peak signal-to-noise ratio (PSNR), and the structural similarity index (SSIM).
For a ground truth image $ \bds{x} $ and its reconstruction $ \widetilde{\bds{x}} $, the NRMSE is defined as
\[
\mathrm{NRMSE}(\widetilde{\bds{x}}, \bds{x}) := \dfrac{\|\bds{x} - \widetilde{\bds{x}}\|_2}{\|\bds{x}\|_2}.
\]
Definitions of PSNR and SSIM follow the standard formulations given in \cite{HAZ10}.
Lower NRMSE and higher PSNR/SSIM values indicate better reconstruction.

Datasets details, sampling patterns, and training hyperparameters for each specific task are provided in the corresponding subsections below.

\subsection{Compressive sensing MRI experiments}\label{S42}

Compressive sensing magnetic resonance imaging (CS-MRI) aims to reconstruct high-quality images from sub-Nyquist $ k $-space measurements (i.e., Fourier domain), thereby accelerating data acquisition.
In CS-MRI, the measurement matrix is a partial Fourier transform matrix, i.e., $ \Phi = \bds{U}\bds{F} $, which is a structurally random matrix (SRM, see Remark \ref{R1}).
Here, $ \bds{U} $ is an undersampling matrix and $ \bds{F} $ is a 2D discrete Fourier transform matrix (DFT).
In our experiments, the undersampling matrix $ \bds{U} $ is implemented as either a 1D Cartesian sampling mask or a 2D random sampling mask, as illustrated in Figure \ref{F3}, where white pixels denote sampled $ k $-space points and black pixels denote unsampled points.
To evaluate the proposed PE-CSNet on this task, we first demonstrate the reconstruction capability and empirical convergence behavior of PE-CSNet (Subsection~\ref{S421}), followed by a comparison with baseline methods, including both state-of-the-art deep unrolling networks and a well-established traditional algorithm (Subsection~\ref{S422}).

\begin{figure}[!htbp]
\centering
\includegraphics[width=.6\linewidth]{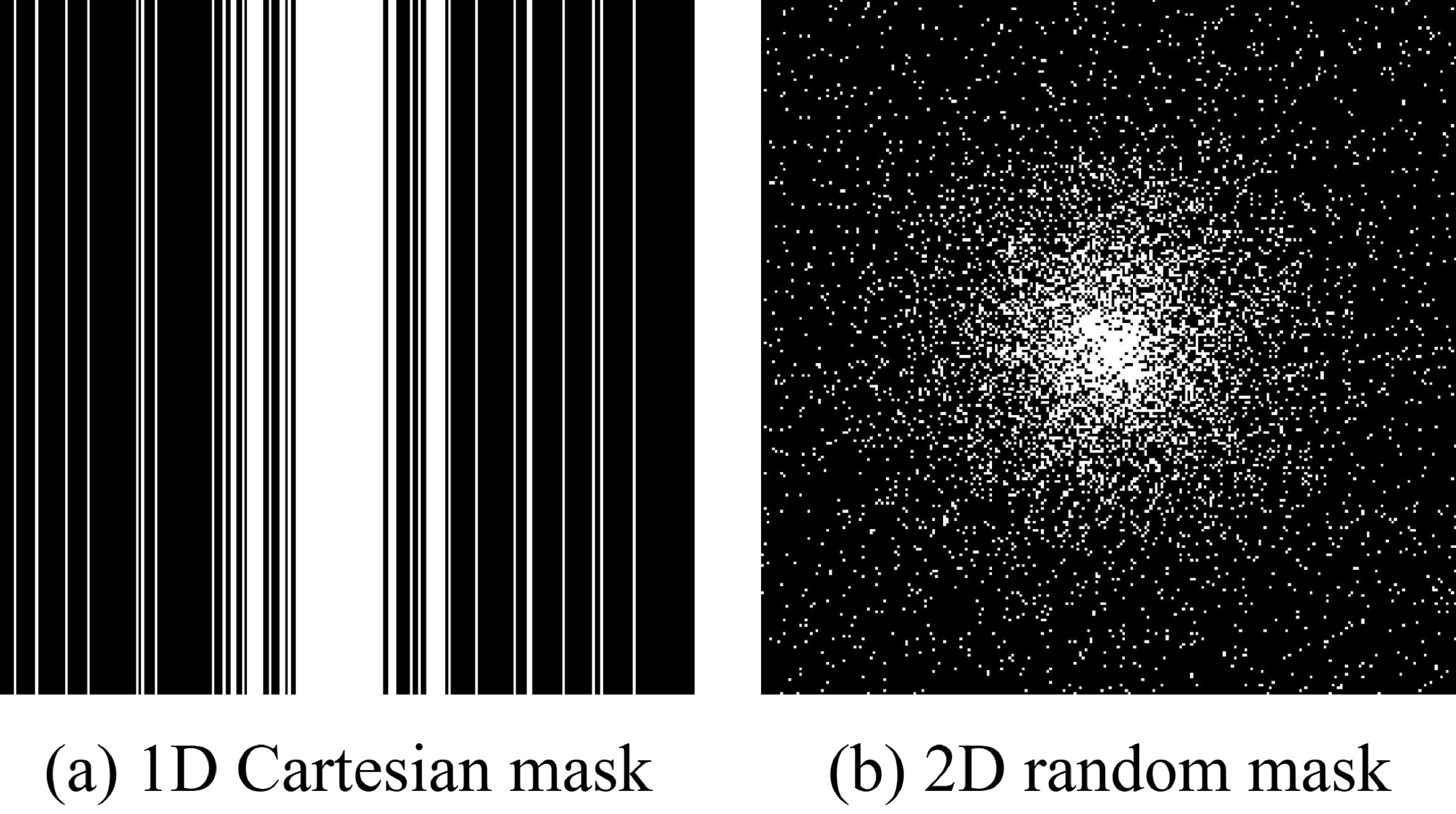}
\caption{Visualization of the undersampling masks used in CS-MRI experiments.
White and black pixels represent sampled and unsampled $ k $-space locations, respectively.
(a) 1D Cartesian mask; (b) 2D random mask.}\label{F3}
\end{figure}

\subsubsection{Performance of PE-CSNet on compressive sensing MRI}\label{S421}
This subsection presents the reconstruction results and empirical convergence behavior of the trained PE-CSNet on the CS-MRI task.
The experimental setup is detailed as follows:
\begin{itemize}
\item \textbf{Dataset and sampling patterns.} We use a standard MRI dataset\footnote{\url{https://github.com/yangyan92/Pytorch_ADMM-CSNet}} \cite{YYS20} which consists of 200 fully sampled, complex-valued brain magnetic resonance (MR) images of size $ (M\times M) $ with $ M=256 $ (i.e., the image dimension defined in Section~\ref{S1}), evenly split into $ 100 $ training and $ 100 $ test images.
We evaluate two sampling patterns: 1D Cartesian masks at sampling rates of 20\%, 30\%, 40\%, and 2D random masks at sampling rates of 5\%, 10\%, 20\% (see Figure \ref{F3}).
A separate instance of PE-CSNet is trained for each mask-rate combination.

\item \textbf{Network configuration.} The proposed (non-shared) PE-CSNet is unrolled for $ N=4 $ stages and operates on an $ (N_p\times N_p) $ grid of image patches with $ N_p = 8 $.
Given the image size $ M=256 $, this yields a patch size of $ m = M/N_p = 32 $.
Here, $ N_p $ and $ m $ are consistent with their definition in Section~\ref{S1}.
The radius of projection operator $ \mathcal{P}_r $ (see \eqref{19}) is set to $ r=100 $.
The measurement matrix in PE-CSNet is defined as $ \Phi = \bds{U}\bds{F} $, where $ \bds{F} $ is the 2D DFT matrix, and $ \bds{U} $ is the undersampling matrix corresponding to the specific mask and sampling rate in the CS-MRI task.

\item \textbf{Training details.} The proposed PE-CSNet is trained end-to-end using the stochastic equivariant training strategy described in Section~\ref{S31}. The optimization adopts the Adam optimizer \cite{KDP14} (learning rate $ 10^{-4} $, batch size $ 1 $, total epochs $ 6000 $) with the Xavier initialization \cite{GXB10}, to minimize the stochastic loss $ \widetilde{\mathcal{L}}_\textrm{PE} $ (see \eqref{18}), where the trade-off parameter $ \beta = 1 $ and $ \widetilde{T}=8 $.
The model is trained for the full $ 6000 $ epochs, and the final parameters (represented by the trained network $ f_\textrm{PE}(\cdot;\widehat{\Theta}) $ as described in Subsection~\ref{S31}) are used for testing.
\end{itemize}
The setup described above applies specifically to the CS-MRI experiments.

\textbf{Reconstruction capacity.} We begin with a quantitative investigation of its reconstruction capability under various sampling patterns and sampling rates: 1D Cartesian masks at 20\%, 30\%, and 40\% sampling rates, and 2D random masks at 5\%, 10\%, and 20\% sampling rates.
The average reconstruction quality (NRMSE, PSNR, SSIM) of PE-CSNet on the test set is summarized in the bottom rows of Tables \ref{T1} and \ref{T2}.
Specifically, the bottom row of Table \ref{T1} shows the results for the networks trained with 1D Cartesian masks at 20\%, 30\%, and 40\% sampling rates,
while the bottom row of Table \ref{T2} shows the results for the networks trained with 2D random masks at 5\%, 10\%, and 20\% sampling rates.
Representative visual reconstructions by PE-CSNet for the CS-MRI task are presented in Figures \ref{F4}(a) and \ref{F4}(b).
Specifically, the first row of Figure \ref{F4}(a) shows the magnitude images of the reconstruction results for 1D Cartesian masks at 20\%, 30\%, and 40\% sampling rates, and the corresponding ground truth images; the first row of Figure \ref{F4}(b) shows the magnitude images of the reconstruction results for 2D random masks at 5\%, 10\%, and 20\% sampling rates, and the corresponding ground truth images.
In all these images, a red box highlights a region of interest, whose enlarged view is provided for detailed examination.
Immediately below each reconstruction, the corresponding pixel-wise error map is displayed. This map visualizes the magnitude (absolute value) of the complex difference between the reconstruction and the ground truth, where brighter pixels indicate larger errors.
Reconstruction results in bottom rows of Tables \ref{T1} and \ref{T2}, and Figure \ref{F4} show that the proposed PE-CSNet achieves satisfactory reconstruction capability across both sampling patterns with different sampling rates.
Notably, it maintains reliable performance even at the low sampling rates (e.g., 20\% for Cartesian and 5\% for random masks), as supported by the quantitative metrics and visual results.
We believe this is because our PE-CSNet has learned the patch-based a priori information of the ground truth image, which is critical for CS-MRI.

\textbf{Empirical convergence.} We further investigate the convergence behavior of (non-shared) PE-CSNet.
This analysis is performed for both sampling patterns, as summarized in Figures \ref{F5} (1D Cartesian) and \ref{F6} (2D random).
Specifically, Figures \ref{F5}(a) and \ref{F6}(a) show the stage-wise reconstruction outputs of PE-CSNet for a 1D Cartesian mask (40\% sampling rate) and a 2D random mask (20\% sampling rate), respectively.
Each subfigure displays the magnitude images of the intermediate outputs $ \bds{x}^{(n)}\; (n=0,1,\ldots,4) $ followed by the ground truth, with the corresponding PSNR (between the output and the ground truth) annotated below each intermediate output of PE-CSNet.
The quantitative convergence curves are plotted in Figures \ref{F5}(b) and \ref{F6}(b).
Figure \ref{F5}(b) shows the NRMSE of the intermediate outputs $ \bds{x}^{(n)}\; (n = 0,1,\ldots,4) $ of PE-CSNet as a function of the stage number $ n $ for a representative test sample.
The curves correspond to reconstructions of this sample under 1D Cartesian masks with sampling rates of 20\%, 30\%, and 40\%.
Similarly, Figure \ref{F6}(b) displays the NRMSE convergence curves of PE-CSNet for a representative test sample.
These curves correspond to the reconstructions of this sample under 2D random masks with sampling rates of 5\%, 10\%, and 20\%.
The convergence behavior of PE-CSNet, averaged over the entire test set, is further illustrated in Figures \ref{F5}(c) and \ref{F6}(c).
Figure \ref{F5}(c) shows the mean NRMSE of the intermediate outputs $ \bds{x}^{(n)}\; (n = 0,1,\ldots,4) $ of PE-CSNet, along with the $ \pm 1 $ standard deviation band (shaded region), computed across the entire test set for the 1D Cartesian mask with a sampling rate of 30\%.
Similarly, Figure \ref{F6}(c) presents the mean NRMSE and the associated $ \pm 1 $ standard deviation band (shaded region) for PE-CSNet across the entire test set, corresponding to the 2D random mask with a sampling rate of 10\%.
Although we only derive the linear convergence rate of a simplified, parameter-shared version of PE-CSNet (see Theorem \ref{thm}), the stable empirical convergence of the full (non-shared) version (see Figures \ref{F5} and \ref{F6}) aligns with the theoretical insight.
The convergence of the parameter-shared version, as a special instance of the full model, provides theoretical support for the observed empirical behavior.
This consistency suggests the inherent stability of our PE-CSNet framework, as discussed in Remark \ref{R6}.

\begin{figure}[!htbp]
\centering
\includegraphics[width=\linewidth]{image/performance_MRI.eps}
\caption{Reconstruction results of PE-CSNet on CS-MRI.
(a) Results for 1D Cartesian masks at 20\%, 30\%, and 40\% sampling rates.
(b) Results for 2D random masks at 5\%, 10\%, and 20\% sampling rates.
For each case, the magnitude image of the reconstruction, the corresponding error map (magnitude of the complex difference), a detailed crop, and the magnitude image of the ground truth are displayed.
}\label{F4}
\end{figure}

\begin{figure}[!htbp]
\centering
\includegraphics[width=\linewidth]{image/Convergence_Cartesian.eps}
\caption{Empirical convergence analysis of PE-CSNet on CS-MRI for 1D Cartesian masks.
(a) Stage-wise reconstruction outputs for the 1D Cartesian mask at 40\% sampling rate.
Magnitude images of the intermediate outputs $ \bds{x}^{(n)}\; (n = 0,1,\ldots,4) $ are shown, followed by the ground truth.
Corresponding PSNR values are annotated below each output.
(b) NRMSE convergence curves for a representative sample under 1D Cartesian masks at 20\%, 30\%, and 40\% sampling rates.
(c) Mean NRMSE $ \pm 1 $ standard deviation (shaded region) across the entire test set for the 1D Cartesian mask at the sampling rate of 30\%.
}\label{F5}
\end{figure}

\begin{figure}[!htbp]
\centering
\includegraphics[width=\linewidth]{image/Convergence_random.eps}
\caption{Empirical convergence analysis of PE-CSNet on CS-MRI for 2D random masks.
(a) Stage-wise reconstruction outputs for the 2D random mask at 20\% sampling rate.
Magnitude images of the intermediate outputs $ \bds{x}^{(n)}\; (n = 0,1,\ldots,4) $ are shown, followed by the ground truth.
Corresponding PSNR values are annotated below each output.
(b) NRMSE convergence curves for a representative sample under 2D random masks at 5\%, 10\%, and 20\% sampling rates.
(c) Mean NRMSE $ \pm 1 $ standard deviation (shaded region) across the entire test set for the 2D random mask at the sampling rate of 10\%.
}\label{F6}
\end{figure}

\subsubsection{Comparison with related works}\label{S422}
In this subsection, we compare the performance of our PE-CSNet (the same networks used in Subsection \ref{S421}) with several popular CS methods.
The methods selected for comparison include Zero-filling \cite{BMA01}, PANO \cite{QXH14}, ISTA-Net \cite{ZJG18}, and ADMM-CSNet \cite{YYS20}.
For a fair comparison, the reproduced reconstruction methods of PANO, ISTA-Net (9 stages), and ADMM-CSNet (10 stages) are implemented with their hyperparameters (e.g., learning rates, regularization parameters) strictly following the configurations in their respective original publications \cite{QXH14, ZJG18, YYS20}.
We make a quantitative comparison of their reconstruction capability under various sampling patterns and sampling rates: 1D Cartesian mask at 20\%, 30\%, and 40\% sampling rates, and 2D random masks at 5\%, 10\%, 20\% sampling rates.
The average reconstruction quality (NRMSE, PSNR, SSIM) and the inference times per sample of the above reconstruction methods are summarized in Tables \ref{T1} and \ref{T2}.
In these two tables, the best performance in each column (i.e., for each metric and sampling rate) is highlighted in bold.
Note that the inference times for the reconstruction methods (except PANO) are measured on the CPU/GPU as detailed in Subsection \ref{S41}; the CPU inference time for PANO (approximately 3 minutes per sample) is omitted from the tables as it is orders of magnitude larger than the millisecond-scale times of the other methods, which would impede a clear visual comparison.
Specifically, Table \ref{T1} shows the results for the CS methods using 1D Cartesian masks at 20\%, 30\%, and 40\% sampling rates, while Table \ref{T2} shows the results for the CS methods using 2D random masks at 5\%, 10\%, 20\% sampling rates.
In addition to the quantitative metrics, we compare the visual quality of the reconstructions.
Figure \ref{F7} presents the magnitude images of the reconstructions of different CS methods under the 1D Cartesian mask at a 30\% sampling rate.
Each reconstruction is labeled with the corresponding method name, and the PSNR (in dB) with respect to the ground truth is annotated beneath it.
In all these images, a red box highlights a region of interest, whose enlarged view is provided for detailed comparison.
As shown in Tables \ref{T1} and \ref{T2}, and Figures \ref{F7}, the proposed PE-CSNet consistently outperforms the state-of-the-art unrolling networks (i.e., ISTA-Net and ADMM-CSNet) and other traditional CS methods across both sampling patterns with different sampling rates, while also maintaining a fast computational speed.
Notably, the proposed PE-CSNet, with $ N=4 $ unrolled stages, outperforms both ISTA-Net (9 stages) and ADMM-CSNet (10 stages) with fewer stages, as supported by the quantitative metrics and visual results.
We attribute this effective performance to two key factors: the equivariant training strategy (details in Section \ref{S31}) and the patch-based architecture of PE-CSNet.
These designs collectively enhance the effective diversity of the training data.
This, in turn, enables the network to more effectively learn the patch-based a priori information inside the ground truth image, which is crucial for high-quality CS-MRI reconstruction.

\begin{table}[!htpb]
\caption{Comparison of average reconstruction quality (NRMSE ($\times 10^{-2}$), PSNR in dB, SSIM ($\times 10^{-2}$)) and average inference times per sample on CS-MRI for 1D Cartesian masks with sampling rates of 20\%, 30\%, and 40\%.
Best performance in each column is highlighted in bold.}\label{T1}
\scriptsize
\renewcommand\arraystretch{1.2}
\centering
\begin{tabular}{p{2.5cm}p{1.1cm}<{\centering}p{.8cm}<{\centering}p{.8cm}<{\centering}p{1.1cm}<{\centering}p{.8cm}<{\centering}p{.8cm}<{\centering}p{1.1cm}<{\centering}p{.8cm}<{\centering}p{.8cm}<{\centering}p{1.5cm}<{\centering}}
\hline
\multirow{2}*{Method} & \multicolumn{3}{c}{20\%} & \multicolumn{3}{c}{30\%} & \multicolumn{3}{c}{40\%} &
Time \\
\cline{2-11}
& NRMSE & PSNR & SSIM & NRMSE & PSNR & SSIM & NRMSE & PSNR & SSIM & CPU/GPU\\
\hline
Zero-filling \cite{BMA01} &{27.43} &{25.93} &{61.93} &{21.68} &{28.10} &{70.38} &{20.01} &{28.62} &{70.96} &{2ms/-}\\

PANO \cite{QXH14} &{20.24} &{28.55} &{73.78} &{16.17} &{30.48} &{80.25} &{12.12} &{33.02} &{83.48} &{-/-}\\

ISTA-Net \cite{ZJG18} &{16.64} &{31.70} &{80.59} &{11.21} &{34.39} &{86.71} &{8.16} &{37.10} &{90.65} & {-/8ms}\\

ADMM-CSNet \cite{YYS20} & {15.98} & {31.38} & {81.15} & {11.19} & {34.40} & {87.50} & {8.08} & {37.27} & {90.76} & {-/458ms}\\

PE-CSNet & \textbf{14.69} & \textbf{31.87} & \textbf{81.68} & \textbf{10.33} & \textbf{34.94} & \textbf{88.02} & \textbf{7.82} & \textbf{37.53} & \textbf{91.44} & {87ms/7ms}\\
\hline
\end{tabular}
\end{table}

\begin{table}[!htbp]
\caption{Comparison of average reconstruction quality (NRMSE ($ \times 10^{-2} $), PSNR (dB), SSIM ($ \times 10^{-2} $)) and average inference times per sample on CS-MRI for 2D random masks with sampling rates of 5\%, 10\%, and 20\%.
Best performance in each column is highlighted in bold.}\label{T2}
\scriptsize
\renewcommand\arraystretch{1.2}
\centering
\begin{tabular}{p{2.5cm}p{1.1cm}<{\centering}p{.8cm}<{\centering}p{.8cm}<{\centering}p{1.1cm}<{\centering}p{.8cm}<{\centering}p{.8cm}<{\centering}p{1.1cm}<{\centering}p{.8cm}<{\centering}p{.8cm}<{\centering}p{1.5cm}<{\centering}}
\hline
\multirow{2}*{Method} & \multicolumn{3}{c}{5\%} & \multicolumn{3}{c}{10\%} & \multicolumn{3}{c}{20\%} &
Time \\
\cline{2-11}
& NRMSE & PSNR & SSIM & NRMSE & PSNR & SSIM & NRMSE & PSNR & SSIM & CPU/GPU\\
\hline
Zero-filling \cite{BMA01} &{34.34} &{23.95} &{51.11} &{29.83} &{25.37} &{56.29} &{25.73} &{26.73} &{60.45} &{2ms/-}\\

PANO \cite{QXH14} &{24.44} &{27.21} &{66.68} &{19.79} &{29.12} &{72.34} &{15.11} &{31.22} &{76.91} &{-/-}\\

ISTA-Net \cite{ZJG18} &{19.81} &{29.70} &{70.20} &{15.61} &{31.69} &{76.45} &{10.54} &{34.52} &{82.77} &{-/8ms}\\

ADMM-CSNet \cite{YYS20} & {20.07} & {29.52} & {73.40} & {15.04} & {31.90} & {78.16} & {10.53} & {34.73} & {83.47} & {-/458ms}\\

PE-CSNet & \textbf{18.24} & \textbf{30.19} & \textbf{75.54} & \textbf{13.91} & \textbf{32.33} & \textbf{79.65} & \textbf{10.02} & \textbf{35.04} & \textbf{84.86} & {82ms/7ms}\\
\hline
\end{tabular}
\end{table}

\begin{figure}[!htbp]
\centering
\includegraphics[width=\linewidth]{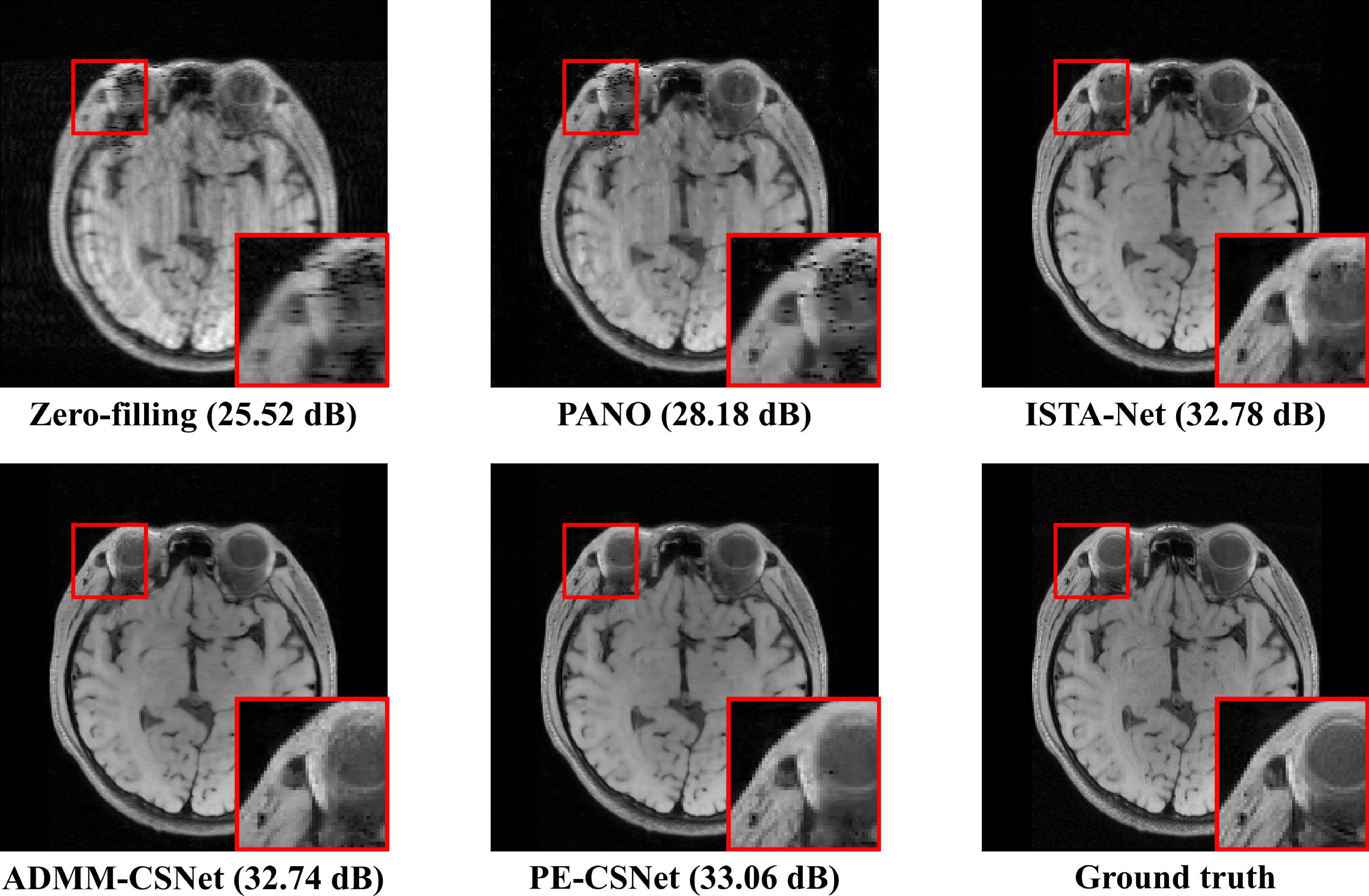}
\caption{Visual comparison of reconstructed magnitude images (and corresponding zoomed-in views) from different CS-MRI methods under a 1D Cartesian mask (30\% sampling rate).
Red boxes indicate the regions of interest for detailed visual comparison.
}\label{F7}
\end{figure}


\subsection{Compressive sensing CDP experiments}\label{S43}

In this work, we consider the compressive sensing problem using coded diffraction patterns (CS-CDPs), which aims to reconstruct a real-valued image from sub-Nyquist samples of its Fourier domain, acquired through random spatial encoding \cite{YYS20, LHL25}.
In the CS-CDP, the measurement matrix is a structurally random matrix (see Remark \ref{R1}) with the definition $ \Phi = \bds{U}\bds{F}\bds{S} $, where $ \bds{U} $ is a diagonal undersampling matrix whose diagonal entries are independent Bernoulli $ \{0,1\} $ random variables, $ \bds{F} $ is the 2D discrete Fourier transform matrix (DFT), and $ \bds{S} $ is a diagonal sign matrix whose diagonal entries are complex numbers of unit modulus (e.g., random phase).
To evaluate the proposed PE-CSNet on this task, we first demonstrate the reconstruction capability and empirical convergence behavior of PE-CSNet (Subsection \ref{S431}), followed by a comparison with representative deep unrolling methods (Subsection \ref{S432}).

\subsubsection{Performance of PE-CSNet on compressive sensing CDP}\label{S431}

This subsection presents the reconstruction results and empirical convergence behavior of the trained PE-CSNet on the CS-CDP task.
The experimental setup is detailed as follows:
\begin{itemize}
\item \textbf{Dataset and sampling patterns.} The training dataset consists of 100 images from the Berkeley BSD dataset\footnote{\url{https://www2.eecs.berkeley.edu/Research/Projects/CS/vision/bsds/}} \cite{MDF01};
the testing dataset is composed of $10$ standard testing images\footnote{\url{https://github.com/cszn/IRCNN/tree/master/testsets}} \cite{ZKZ17}. 
All images from both the training and test set are resized to $ (M\times M) $ with $ M=256 $ (i.e., the image dimension defined in Section~\ref{S1}).
We evaluate the undersampling masks at sampling rates of 5\%, 10\%, 20\%.
A separate instance of PE-CSNet is trained for each sampling rate.

\item \textbf{Network configuration.} The proposed (non-shared) PE-CSNet adopts the same configuration as in Subsection \ref{S421}, except that PE-CSNet is unrolled for $ N=6 $ stages.
In addition, the measurement matrix within the network is defined as $ \Phi = \bds{U}\bds{F}\bds{S} $ (see the beginning of Subsection \ref{S43}).

\item \textbf{Training details.} The training details of the proposed PE-CSNet are the same as in Subsection \ref{S421}.
\end{itemize}
The setup described above applies specifically to the CS-CDP experiments.


\textbf{Reconstruction capacity.} We first quantitatively evaluate the reconstruction capability of PE-CSNet under sampling rates of 5\%, 10\%, and 20\%.
The average reconstruction quality (NRMSE, PSNR, SSIM) of PE-CSNet on the test set is summarized in the bottom row of Table \ref{T3}.
Representative visual reconstructions by PE-CSNet for the CS-CDP task are presented in Figure \ref{F10}(a) and \ref{F10}(b).
Specifically, the first rows of Figures \ref{F10}(a) and \ref{F10}(b) show the reconstructions at sampling rates of 5\%, 10\%, and 20\%, and the corresponding ground truth images.
In all these images, a red box highlights a region of interest, whose enlarged view is provided for detailed examination.
Immediately below each reconstruction, the corresponding pixel-wise error map is displayed.
This map visualizes the absolute difference between the reconstruction and the ground truth, where brighter pixels indicate larger errors.
Reconstruction results in the bottom row of Table \ref{T3} and Figure \ref{F10} show that the proposed PE-CSNet achieves satisfactory reconstruction capability under different sampling rates.
Notably, it maintains reliable performance even at low sampling rates (e.g., 5\% and 10\%), as supported by the quantitative metrics and visual results.
This satisfactory performance can be attributed to the ability of PE-CSNet to learn the patch-based a priori information inherent in the ground truth image, which is critical for solving CS-CDP.

\textbf{Empirical convergence.} We further investigate the convergence behavior of (non-shared) PE-CSNet.
Figure \ref{F11}(a) shows the stage-wise reconstruction outputs of PE-CSNet for a sampling rate of 20\%.
Each subfigure displays the intermediate outputs $ \bds{x}^{(n)}\; (n=0,1,\ldots,6) $ followed by the ground truth, with the corresponding PSNR (between the output and the ground truth) annotated below each intermediate output of PE-CSNet.
The quantitative convergence curves are plotted in Figure \ref{F11}(b).
Figure \ref{F11}(b) shows the NRMSE of the intermediate outputs $ \bds{x}^{(n)}\; (n = 0,1,\ldots,6) $ of PE-CSNet as a function of the stage number $ n $ for a representative test sample.
The curves correspond to reconstructions of this sample under sampling rates of 5\%, 10\%, and 20\%.
The convergence behavior of PE-CSNet, averaged over the entire test set, is further illustrated in Figure \ref{F11}(c).
Figure \ref{F11}(c) shows the mean NRMSE of the intermediate outputs $ \bds{x}^{(n)}\; (n = 0,1,\ldots,6) $ of PE-CSNet, along with the $ \pm 1 $ standard deviation band (shaded region), computed across the entire test set for the sampling rate of 10\%.
Although we only derive the linear convergence rate of a simplified, parameter-shared version of PE-CSNet (see Theorem \ref{thm}), the stable empirical convergence of the full (non-shared) version (see Figure \ref{F11}) aligns with the theoretical insight.
The convergence of the parameter-shared version, as a special instance of the full model, provides theoretical support for the observed empirical behavior.
This consistency suggests the inherent stability of our PE-CSNet framework, as discussed in Remark \ref{R6}.

\subsubsection{Comparison with related works}\label{S432}
In this subsection, we compare the performance of our PE-CSNet (the same networks used in Subsection \ref{S431}) with several popular CS methods.
The methods selected for comparison include ISTA-Net \cite{ZJG18} and ADMM-CSNet \cite{YYS20}.
For a fair comparison, the reproduced reconstruction methods of ISTA-Net (9 stages) and ADMM-CSNet (10 stages) are implemented with their hyperparameters (e.g., learning rates, epochs) strictly following the configurations in their respective original publications \cite{ZJG18, YYS20}.
We make a quantitative comparison of their reconstruction capability under sampling rates of 5\%, 10\%, 20\%.
The average reconstruction quality (NRMSE, PSNR, SSIM) and the inference times per sample of the above reconstruction methods are summarized in Table \ref{T3}.
In this table, the best performance in each column (i.e., for each metric and sampling rate) is highlighted in bold.
Note that the inference times for the reconstruction methods are measured on the GPU as detailed in Subsection \ref{S41}.
In addition to the quantitative metrics, we compare the visual quality of the reconstructions.
Figure \ref{F12} presents the reconstructions of different CS methods at sampling rates of 5\%, 10\%, and 20\%.
Each reconstruction is labeled with the corresponding PSNR (in dB) with respect to the ground truth.
In all these images, a red box highlights a region of interest, whose enlarged view is provided for detailed comparison.
As shown in Table \ref{T3} and Figure \ref{F12}, the proposed PE-CSNet not only consistently outperforms the state-of-the-art unrolling networks (i.e., ISTA-Net and ADMM-CSNet) across different sampling rates, but also maintains a fast computational speed.
Notably, PE-CSNet, with only $ N=6 $ unrolled stages, outperforms both ISTA-Net (9 stages) and ADMM-CSNet (10 stages) with fewer stages, as supported by the quantitative metrics and visual results.
We attribute this effective performance to two key factors: the equivariant training strategy (details in Section \ref{S31}) and the patch-based architecture.
These designs collectively enhance the effective diversity of the training data.
This, in turn, enables the network to more effectively learn the patch-based a priori information inside the ground truth image, which is crucial for high-quality CS-CDP reconstruction.

\begin{figure}[!htbp]
\centering
\includegraphics[width=\linewidth]{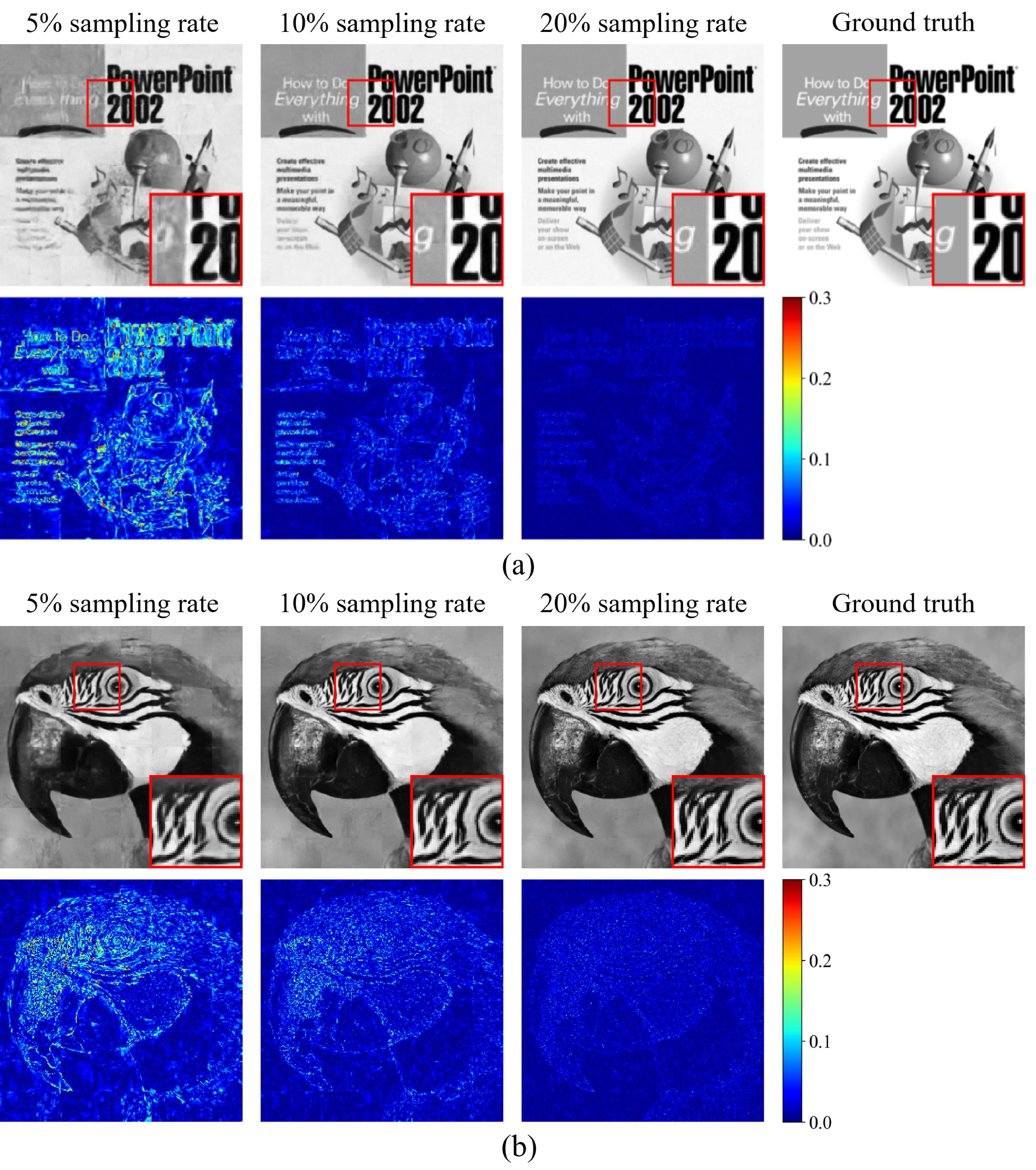}
\caption{Reconstruction results of PE-CSNet on CS-CDP at 5\%, 10\%, and 20\% sampling rates.
For each reconstruction, the corresponding error map, a detailed crop, and the corresponding ground truth are displayed.
}\label{F10}
\end{figure}

\begin{figure}[!htbp]
\centering
\includegraphics[width=\linewidth]{image/Convergence_CDP.eps}
\caption{Empirical convergence analysis of PE-CSNet for CS-CDP.
(a) Stage-wise reconstruction outputs for a 20\% sampling rate.
The intermediate outputs $ \bds{x}^{(n)}\; (n = 0,1,\ldots,6) $ are shown, followed by the ground truth.
Corresponding PSNR values are annotated below each output.
(b) NRMSE convergence curves for a representative sample under 5\%, 10\%, and 20\% sampling rates.
(c) Mean NRMSE $ \pm 1 $ standard deviation (shaded region) across the entire test set for the sampling rate of 10\%.
}\label{F11}
\end{figure}

\begin{table}[!htpb]
\caption{Comparison of average reconstruction quality (NRMSE ($ \times 10^{-2} $), PSNR (dB), and SSIM ($\times 10^{-2}$)) and average inference times per sample on CS-CDP for sampling rates of 5\%, 10\%, and 20\%.
Best performance in each column is highlighted in bold.}\label{T3}
\scriptsize
\renewcommand\arraystretch{1.2}
\centering
\begin{tabular}{p{2.5cm}p{1.1cm}<{\centering}p{.8cm}<{\centering}p{.8cm}<{\centering}p{1.1cm}<{\centering}p{.8cm}<{\centering}p{.8cm}<{\centering}p{1.1cm}<{\centering}p{.8cm}<{\centering}p{.8cm}<{\centering}p{1.5cm}<{\centering}}
\hline
\multirow{2}*{Method} & \multicolumn{3}{c}{5\%} & \multicolumn{3}{c}{10\%} & \multicolumn{3}{c}{20\%} &
Time \\
\cline{2-11}
& NRMSE & PSNR & SSIM & NRMSE & PSNR & SSIM & NRMSE & PSNR & SSIM & GPU\\
\hline

ISTA-Net \cite{ZJG18} & {15.41} & {21.42} & {55.02} & {7.72} & {27.43} & {80.34} & {4.38} & {32.43} & {90.54} & {7ms}\\

ADMM-CSNet \cite{YYS20} & {11.70} & {23.78} & {63.07} & {6.79} & {28.55} & {79.06} & {3.76} & {33.87} & {90.43} & {301ms}\\

PE-CSNet & \textbf{8.34} & \textbf{26.74} & \textbf{78.15} & \textbf{5.10} & \textbf{31.10} & \textbf{89.08} & \textbf{2.80} & \textbf{36.50} & \textbf{95.49} & 11ms \\
\hline
\end{tabular}

\end{table}

\begin{figure}[!htbp]
\centering
\includegraphics[width=\linewidth]{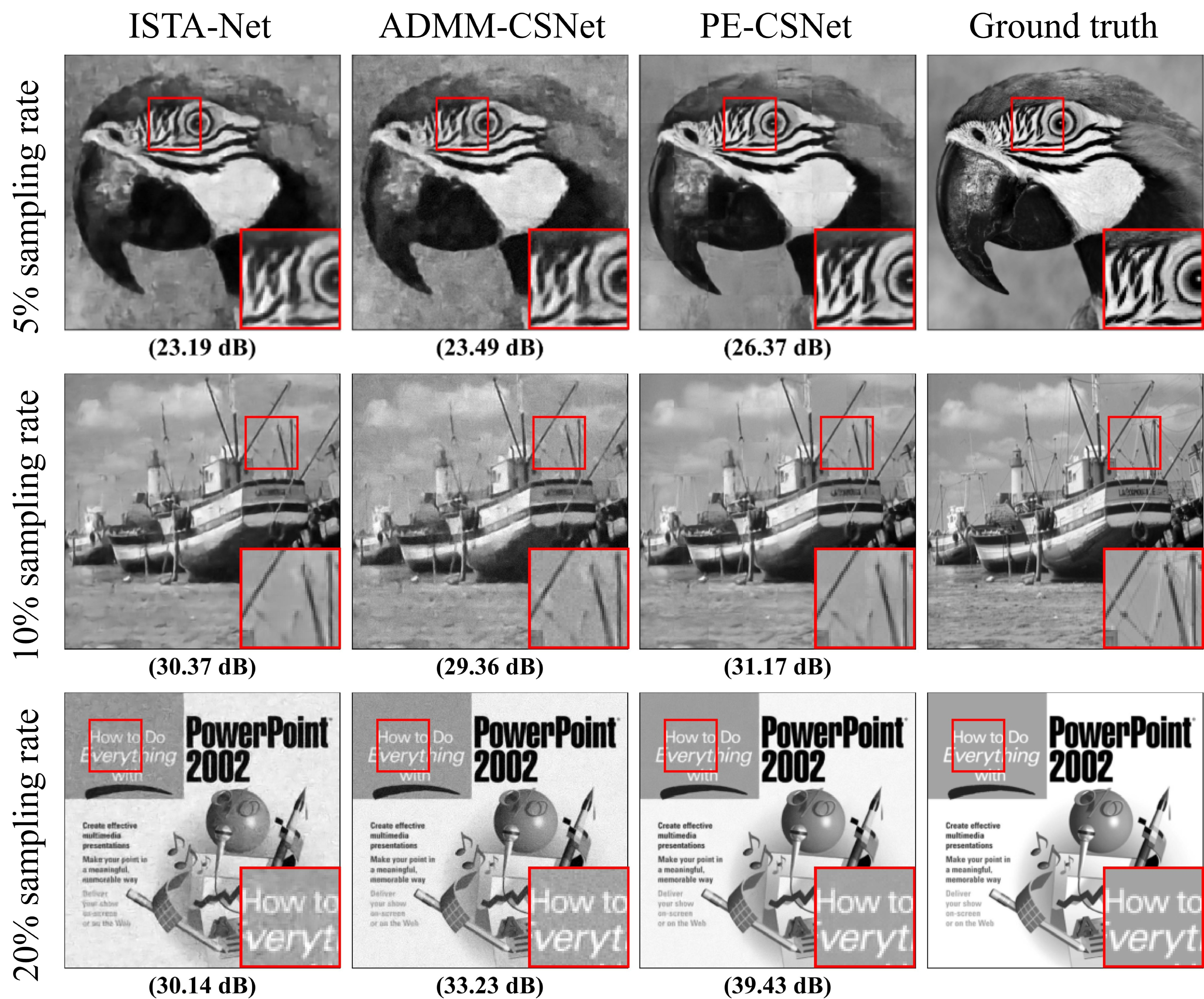}
\caption{Visual comparison of reconstructions (and corresponding zoomed-in views) from different CS-CDP methods under sampling rates of 5\%, 10\%, and 20\%.
Red boxes indicate the regions of interest for detailed visual comparison.
}\label{F12}
\end{figure}

\subsection{Ablation study}\label{S44}

To verify the individual contribution of the two core designs in the proposed PE-CSNet, namely the stage-specific (non-shared) parameters and the patch-based a priori information, we conduct ablation studies by comparing the full version of PE-CSNet (i.e., Algorithm \ref{PE-CSNet}) with the following two ablated versions of PE-CSNet:
\begin{itemize}
\item \textbf{PE-CSNet (shared)}: A version where the network parameters $\bds{\theta}^{(n)}$ are shared across all unrolled stages, i.e., $\bds{\theta}^{(n)} = \bds{\theta}$ for all $n$ in Algorithm \ref{PE-CSNet}.
This version ablates the stage-specific adaptation of PE-CSNet.

\item \textbf{PE-CSNet (global)}: A version where the patch-extraction operator $ D_{ij} $ and the operator $ \mathtt{Reassemble}(\cdot) $ are removed, and the core networks $ \mathcal{U}_{\alpha,\bds{\theta}}^{(n)}\; (n=0,1,\ldots,N-1) $ are applied directly to the full image.
That is, lines 3--4 of Algorithm \ref{PE-CSNet} are replaced by
\[\bds{z}^{(n)} = \mathcal{U}_{\alpha,\bds{\theta}}^{(n)}\left(\mathcal{P}_r\left(\bds{x}^{(n)}\right)\right).\]
This version ablates the explicit patch-based structure of PE-CSNet.
\end{itemize}
For a fair comparison, we evaluate the above three models on the same CS-MRI task under 1D Cartesian masks at sampling rates of 20\%, 30\%, and 40\% (see Subsection \ref{S42}).
For the experimental setup, all these models follow the dataset and sampling patterns, network configuration, and training details as in Subsection \ref{S421}.
We make a quantitative comparison of their reconstruction capacity under 1D Cartesian masks at 20\%, 30\%, and 40\% sampling rates.
The average reconstruction quality, measured by NRMSE, PSNR, and SSIM, is summarized in Table \ref{T4}.
In this table, the best performance in each column (i.e., for each metric and sampling rate) is highlighted in bold.
As shown in Table \ref{T4}, 
the proposed PE-CSNet consistently outperforms its two ablated versions (i.e., PE-CSNet (shared) and PE-CSNet (global)) under 1D Cartesian masks across different sampling rates.
Although Theorem \ref{thm} establishes the linear convergence rate for the parameter-shared version of PE-CSNet (i.e., PE-CSNet (shared)), our ablation study reveals a performance degradation in this version.
This indicates that sharing parameters across stages, while theoretically convenient, limits the network's capacity to learn stage-specific features, thereby restricting its refinement capability across iterations.
Similarly, we also observe a clear performance drop of PE-CSNet (global) version in the ablation study, which confirms the critical role of the patch-based architecture of the proposed PE-CSNet.
On the one hand, the patch-based architecture allows the network to explicitly utilize the self-similarity inside the ground truth images, and thus learn the patch-based a priori information, which is crucial for high-quality CS recovery.
On the other hand, this patch-based architecture, together with the equivariant training strategy (see Subsection \ref{S31}), substantially increases the diversity of the training data.
This enhanced diversity serves as a powerful regularizer and enables robust training of our PE-CSNet.
Therefore, the patch-based architecture of PE-CSNet is essential both as a source of important a priori information and as a mechanism for enhancing learning.

\begin{table}[!htpb]
\caption{Comparisons of average reconstruction quality (NRMSE ($ \times 10^{-2} $), PSNR (dB), and SSIM ($ \times 10^{-2} $)) and average inference times per sample on CS-MRI for 1D Cartesian masks at 20\%, 30\%, and 40\% sampling rates.
Best performance in each column is highlighted in bold.}\label{T4}
\scriptsize
\renewcommand\arraystretch{1.3}
\centering
\begin{tabular}{p{2.6cm}p{1.1cm}<{\centering}p{.8cm}<{\centering}p{.8cm}<{\centering}p{1.1cm}<{\centering}p{.8cm}<{\centering}p{.8cm}<{\centering}p{1.1cm}<{\centering}p{.8cm}<{\centering}p{.8cm}<{\centering}}
\hline
\multirow{2}*{Method} & \multicolumn{3}{c}{20\%} & \multicolumn{3}{c}{30\%} & \multicolumn{3}{c}{40\%} \\
\cline{2-10}
& NRMSE & PSNR & SSIM & NRMSE & PSNR & SSIM & NRMSE & PSNR & SSIM\\
\hline

PE-CSNet (shared) &{17.01} &{30.68} &{78.97} &{12.00} &{33.47} &{84.85} &{8.84} &{36.35} &{89.02}\\

PE-CSNet (global) & {16.35} & {31.14} & {80.04} & {11.40} & {34.07} & {85.72} & {8.31} & {36.94} & {89.89}\\

PE-CSNet & \textbf{14.69} & \textbf{31.87} & \textbf{81.68} & \textbf{10.33} & \textbf{34.94} & \textbf{88.02} & \textbf{7.82} & \textbf{37.53} & \textbf{91.44}\\
\hline
\end{tabular}

\end{table}


\section{Conclusion}\label{S5}
In this paper, we proposed PE-CSNet, a novel deep unrolling network for compressive sensing (CS) recovery.
It is derived by unrolling the block coordinate descent (BCD) algorithm applied to a generalized patch-based CS model. Through end-to-end training, PE-CSNet learns to solve various CS tasks (e.g., CS-MRI or CS-CDP) and automatically learns the patch-based sparsity transforms and other hyperparameters.
To ensure robust learning with limited data, we introduced a stochastic equivariant strategy that leverages the network's patch-based architecture.
We also studied a simplified parameter-shared PE-CSNet iteration and proved
that, under suitable parameter constraints, the generated sequence converges
linearly to a fixed point. This result provides a stability guarantee for a
restricted instance of the proposed architecture.

Extensive numerical experiments demonstrate that our PE-CSNet achieves state-of-the-art performance for various CS tasks, including CS-MRI and CS-CDP, while maintaining fast inference speed.
First, we observe that PE-CSNet not only generates reliable reconstructions for these tasks across various sampling patterns and rates (see Subsections \ref{S421} and \ref{S431}), but also consistently outperforms existing state-of-the-art unrolling networks (e.g., ISTA-Net, ADMM-CSNet) and some traditional CS methods (see Subsections \ref{S422} and \ref{S432}).
This superior performance is attributed to the network's ability to effectively learn and exploit the patch-based a priori information inherent in the ground truth images.
The equivariant training strategy (see Subsection \ref{S31}) and the patch-based architecture work together to enhance the diversity of the training data, which helps the network learn the patch-based a priori information more effectively.
Second, PE-CSNet exhibits stable empirical convergence across all tested conditions for both CS-MRI and CS-CDP tasks (see Subsections \ref{S421} and \ref{S431}).
This observed convergence behavior in practice is well-aligned with our theoretical finding that a parameter-shared instance of the PE-CSNet architecture (i.e., PE-CSNet (shared) in Subsection \ref{S44}) guarantees linear convergence.
This consistency between theory and practice suggests the inherent stability of our PE-CSNet framework.
Furthermore, the ablation study verifies the necessity of the stage-specific design.
Specifically, the comparison between PE-CSNet and PE-CSNet (shared) confirms that using stage-specific parameters is important.
This design enables the network to learn distinct features at each iteration, thereby enhancing its refinement capability across stages.
Third, the ablation study independently validates the critical role of the patch-based architecture of PE-CSNet.
The clear performance drop of the PE-CSNet (global) version (see Subsection \ref{S44}) demonstrates that this architecture is indispensable.
It functions in two ways: as an explicit patch-based structure that enables PE-CSNet to capture image self-similarity; and as a data-augmentation mechanism (jointly with the equivariant training strategy) that greatly increases the training sample diversity.
Thus, the patch-based architecture of PE-CSNet is essential both as a source of significant a priori information and as a mechanism for enhancing learning.
However, it is also noteworthy that in our experiments, the reconstruction quality of PE-CSNet degrades when the sampling rate is small.
One plausible reason for this could be the relatively limited size of the training dataset (100 samples), which may constrain the network's ability to generalize effectively under extreme measurement sparsity.
It is also interesting to explore unsupervised or self-supervised strategies to learn the underlying patch-based transform sparsity, which will be considered as a future work.

\section*{Acknowledgments}

The work of K. Li and Z. Zhou is partially supported by National Natural Science Foundation of China (Projects 12422117),
Hong Kong Research Grants Council (15302323 and 12426312) and an internal grant of Hong Kong Polytechnic University (Project ID: P0053938, Work Programme: 4-ZZVA).
The work of B. Zhang and H. Zhang is partially supported by the NNSF of China grant (12431016 and 12271515).

\appendix
\renewcommand{\thesection}{\Alph{section}}
\renewcommand{\theequation}{\thesection.\arabic{equation}}
\setcounter{equation}{0}
\section{Proof of Theorem \ref{thm}}\label{A}

\begin{proof}
We first show that the iteration sequence $ \{\bds{x}^{(n)}\} $ in Algorithm \ref{PE-CSNet} is a Cauchy sequence, i.e.,
\begin{equation}\label{23}
\sum_{n=0}^\infty \left\|\bds{x}^{(n+1)}-\bds{x}^{(n)}\right\|_2 < \infty.
\end{equation}
In fact, by the triangle inequality, we have
\begin{equation}\label{24}
\sum_{n=0}^\infty \left\|\bds{x}^{(n+1)}-\bds{x}^{(n)}\right\|_2
\le \sum_{n=0}^\infty \left\|\bds{x}^{(n+1)}-\bds{z}^{(n)}\right\|_2 + \sum_{n=0}^\infty \left\|\bds{z}^{(n)}-\bds{v}^{(n)}\right\|_2 + \sum_{n=0}^\infty \left\|\bds{v}^{(n)}-\bds{x}^{(n)}\right\|_2,
\end{equation}
where $ \bds{v}^{(n)} := \mathtt{Reassemble}\left(\left\{\mathcal{P}_r(D_{ij} \bds{x}^{(n)})\right\}\right) $.
Here, $ \mathtt{Reassemble}(\cdot) $ and $ D_{ij} $ follow Section \ref{S1}, and $ \mathcal{P}_r(\cdot) $ is the same as in \eqref{19}.
Thus, it suffices to show that the last three summation terms in \eqref{24} are finite.

First, we show that $ \sum_{n=0}^\infty \left\|\bds{z}^{(n)}-\bds{v}^{(n)}\right\|_2< \infty $.
Due to the $L$-Lipschitz continuity of $\mathcal{U}(\cdot;\bds{\theta})$ (Assumption \ref{A1}), the following holds for all $\bds{x}\in B_r$.
\begin{equation}\label{22}
\|\mathcal{U}(\bds{x};\bds{\theta})\|_2 \le \|\mathcal{U}(\bds{0};\bds{\theta})\|_2 + Lr.
\end{equation}
Recalling $ D_{ij}\bds{z}^{(n)} := \mathcal{U}_{\alpha,\bds{\theta}}^{(n)}\left(\mathcal{P}_r\left(D_{ij}\bds{x}^{(n)}\right)\right) $, we have
\begin{equation}\label{21}
\left\|\bds{z}^{(n)}-\bds{v}^{(n)}\right\|_2^2
=\left(\alpha^{(n)}\right)^2\sum_{i,j}\left\|\mathcal{U}\left(\mathcal{P}_r\left(D_{ij}\bds{x}^{(n)}\right); \bds{\theta}\right)\right\|_2^2
\le C_1\left(\alpha^{(n)}\right)^2
\end{equation}
for some constant $C_1>0$.
By Assumption \ref{A2} ($\alpha^{(n)} \le 1/\rho^{(n)}$ and $\rho^{(n+1)} \ge \gamma \rho^{(n)}$ with $\gamma > 1$), it follows that
\begin{equation}\label{25}
\sum_{n=1}^\infty \left\|\bds{z}^{(n)}-\bds{v}^{(n)}\right\|_2\le \sqrt{C_1}\sum_{n=1}^\infty\alpha^{(n)}\le \sqrt{C_1}\sum_{n=1}^\infty\dfrac{1}{\rho^{(n)}}<\infty.
\end{equation}

Second, we show that $ \sum_{n=0}^\infty \left\|\bds{x}^{(n+1)}-\bds{z}^{(n)}\right\|_2<\infty $.
From the optimality condition of \eqref{12}, we have $\nabla f(\bds{x}^{(n+1)}) + \rho^{(n+1)}(\bds{x}^{(n+1)}-\bds{z}^{(n)}) = 0$ with $f(\bds{x}) = \frac{1}{2}\|\Phi \bds{x} - \bds{y}\|_2^2$, which yields
\begin{equation}\label{27}
\left\|\bds{x}^{(n+1)}-\bds{z}^{(n)}\right\|_2 = \dfrac{\left\|\nabla f(\bds{x}^{(n+1)})\right\|_2}{\rho^{(n+1)}}.
\end{equation}
Since $\{\bds{z}^{(n)}\}$ and $\{\bds{x}^{(n)}\}$ are bounded (ensured by \eqref{21}, \eqref{14}, and the bounded norm of $(\Phi^H\Phi + \rho^{(n+1)}I)^{-1}$), the gradient $\left\{\nabla f(\bds{x}^{(n)})\right\} $ is uniformly bounded by $C_2>0$, i.e., 
$\left\|\nabla f(\bds{x}^{(n)})\right\|_2\le C_2$. Thus we obtain,
\begin{equation}\label{28}
\sum_{n=0}^\infty\left\|\bds{x}^{(n+1)}-\bds{z}^{(n)}\right\|_2 \le C_2\sum_{n=0}^\infty\frac{1}{\rho^{(n+1)}}<\infty.
\end{equation}

Third, we show that $ \sum_{n=0}^\infty \left\|\bds{v}^{(n)}-\bds{x}^{(n)}\right\|_2<\infty $.
Let $\bds{\delta}_{ij}^{(n)} := \left\|D_{ij}\bds{v}^{(n)}-D_{ij}\bds{x}^{(n)}\right\|_2$.
Using the non-expansive property of the projection operator $\mathcal{P}_r$, we obtain
\begin{equation*}
\begin{aligned}
\bds{\delta}_{ij}^{(n+1)}
&\le \left\|D_{ij}\bds{v}^{(n+1)}-\mathcal{P}_r(D_{ij}\bds{z}^{(n)})\right\|_2+\left\|\mathcal{P}_r(D_{ij}\bds{z}^{(n)})-D_{ij}\bds{z}^{(n)}\right\|_2 + \left\|D_{ij}\bds{z}^{(n)} - D_{ij}\bds{x}^{(n+1)}\right\|_2\\
&\le 2\left\|D_{ij}\bds{x}^{(n+1)}-D_{ij}\bds{z}^{(n)}\right\|_2 + \left\|D_{ij}\bds{z}^{(n)}-D_{ij}\bds{v}^{(n)}\right\|_2.
\end{aligned}
\end{equation*}
Combining this with \eqref{21}, \eqref{27}, and the Cauchy-Schwarz inequality leads to
\begin{equation}\label{31}
\begin{aligned}
\left\|\bds{v}^{(n+1)}-\bds{x}^{(n+1)}\right\|_2^2
&\le 2N_p^2\left(4\sum_{i,j}\left\|D_{ij}\bds{x}^{(n+1)}-D_{ij}\bds{z}^{(n)}\right\|_2^2 + \sum_{i,j}\left\|D_{ij}\bds{z}^{(n)}-D_{ij}\bds{v}^{(n)}\right\|_2^2\right)\\
&=2N_p^2\left(4\left\|\bds{x}^{(n+1)}-\bds{z}^{(n)}\right\|_2^2 + \left\|\bds{z}^{(n)}-\bds{v}^{(n)}\right\|_2^2\right)\le \dfrac{C_3}{\left(\rho^{(n)}\right)^2}
\end{aligned}
\end{equation}
for some constant $ C_3>0 $, where $ N_p $ is the patch grid size as in Section \ref{S1}.
This implies
\begin{equation}\label{30}
\sum_{n=2}^\infty \left\|\bds{v}^{(n)}-\bds{x}^{(n)}\right\|_2\le \sqrt{C_3}\sum_{n=2}^\infty\dfrac{1}{\rho^{(n-1)}}<\infty.
\end{equation}
Combining \eqref{24}, \eqref{25}, \eqref{28}, and \eqref{30} proves \eqref{23}, meaning $ \left\{\bds{x}^{(n)}\right\} $ converges to some $\bds{x}^*$.

It remains to estimate the convergence rate of $ \left\{\bds{x}^{(n)}\right\} $.
In fact, by \eqref{21}, \eqref{27}, \eqref{31}, for all $ k\ge 2 $
\begin{equation}\label{32}
\left\|\bds{x}^{(k+1)}-\bds{x}^{(k)}\right\|_2\le \dfrac{C_2}{\rho^{(k+1)}} + \dfrac{\sqrt{C_1}}{\rho^{(k)}} + \dfrac{\sqrt{C_3}}{\rho^{(k-1)}}\le \dfrac{C_4}{\rho^{(k-1)}}
\end{equation}
for some constant $ C_4>0 $.
Using $ \rho^{(k)}\ge \gamma^{k-1}\rho^{(1)} $ (Assumption \ref{A2}), the error rate for $ n\ge 3 $ is bounded by
\begin{equation*}
\left\|\bds{x}^{(n)}-\bds{x}^*\right\|_2 \le \sum_{k=n}^\infty \left\|\bds{x}^{(k+1)}-\bds{x}^{(k)}\right\|_2\le \sum_{k=n}^\infty\dfrac{C_4}{\rho^{(k-1)}}\le \dfrac{C_4}{\rho^{(1)}} \sum_{k=n}^\infty\dfrac{1}{\gamma^{k-2}} = \dfrac{C_4\gamma^3}{\rho^{(1)}(\gamma-1)}\gamma^{-n}.
\end{equation*}
Note that $ \gamma>1 $ (see Assumption \ref{A2}).
The proof is complete.
\end{proof}


\begin{thebibliography}{99}
\providecommand{\url}[1]{\texttt{#1}}
\providecommand{\urlprefix}{URL }
\expandafter\ifx\csname urlstyle\endcsname\relax
\providecommand{\doi}[1]{doi:\discretionary{}{}{}#1}\else
\providecommand{\doi}{doi:\discretionary{}{}{}\begingroup
\urlstyle{rm}\Url}\fi

\bibitem{ASM19}
S.~Arridge, P.~Maass, O.~Öktem, and C.-B.~Schönlieb, Solving inverse problems using data-driven models, \textit{Acta Numer.} \textbf{28} (2019), 1--174.


\bibitem{BAT09}
A.~Beck and M.~Teboulle, A fast iterative shrinkage-thresholding algorithm for
linear inverse problems, \textit{SIAM J. Imaging Sci.} \textbf{2} (2009),
183--202.

\bibitem{BAT13}
A.~Beck and L.~Tetruashvili, On the convergence of block coordinate descent
type methods, \textit{SIAM J. Optim.} \textbf{23} (2013), 2037--2060.

\bibitem{BMB18}
M.~Benning and M.~Burger, Modern regularization methods for inverse problems, \textit{Acta Numer.} \textbf{27} (2018), 1--111.

\bibitem{BMA01}
M.~A. Bernstein, S.~B. Fain, and S.~J. Riederer, Effect of windowing and
zero-filled reconstruction of {MRI} data on spatial resolution and
acquisition strategy, \textit{J. Magn. Reson. Imaging} \textbf{14} (2001),
270--280.

\bibitem{BAC10}
A.~Buades, B.~Coll, and J.~M. Morel, Image denoising methods. {A} new nonlocal
principle, \textit{SIAM Rev.} \textbf{52} (2010), 113--147.

\bibitem{CEJ06}
E.~J. Candes and T.~Tao, Near-optimal signal recovery from random projections:
universal encoding strategies?, \textit{IEEE Trans. Inform. Theory}
\textbf{52} (2006), 5406--5425.

\bibitem{CEJ08}
E.~J. Candes and M.~B. Wakin, An introduction to compressive sampling,
\textit{IEEE Signal Process. Mag.} \textbf{25} (2008), 21--30.

\bibitem{CJL20}
J.~Cao, S.~Liu, H.~Liu, and H.~Lu, {CS-MRI} reconstruction based on analysis
dictionary learning and manifold structure regularization, \textit{Neural
Networks} \textbf{123} (2020), 217--233.

\bibitem{CAP11}
A.~Chambolle and T.~Pock, A first-order primal-dual algorithm for convex
problems with applications to imaging, \textit{J. Math. Imaging Vision}
\textbf{40} (2011), 120--145.

\bibitem{CBZ25}
B.~Chen and J.~Zhang, Practical compact deep compressed sensing, \textit{IEEE
Trans. Pattern Anal. Mach. Intell.} \textbf{47} (2025), 1610--1626.

\bibitem{CDT21}
D.~Chen, J.~Tachella, and M.~E. Davies, Equivariant imaging: Learning beyond
the range space, \textit{2021 IEEE/CVF International Conference on Computer
Vision (ICCV 2021)}, 2021, 4359--4368.

\bibitem{CDT22}
D.~Chen, J.~Tachella, and M.~E. Davies, Robust equivariant imaging: a fully
unsupervised framework for learning to image from noisy and partial
measurements, \textit{2022 IEEE/CVF Conference on Computer Vision and Pattern
Recognition (CVPR 2022)}, 2022, 5637--5646.

\bibitem{CDD23}
D.~Chen, M.~Davies, M.~J. Ehrhardt, C.-B. Schonlieb, F.~Sherry, and
J.~Tachella, Imaging with equivariant deep learning: From unrolled network
design to fully unsupervised learning, \textit{IEEE Signal Process. Mag.}
\textbf{40} (2023), 134--147.

\bibitem{CTS16}
T.~S. Cohen and M.~Welling, Group equivariant convolutional networks, M.~F.
Balcan and K.~Q. Weinberger (eds.), \textit{International Conference on
Machine Learning, Vol 48}, 2016.

\bibitem{CZX24}
Z.-X. Cui, Q.~Zhu, J.~Cheng, B.~Zhang, and D.~Liang, Deep unfolding as
iterative regularization for imaging inverse problems, \textit{Inverse
Problems} \textbf{40} (2024), Paper No. 025011, 31.

\bibitem{DKF07}
K.~Dabov, A.~Foi, V.~Katkovnik, and K.~Egiazarian, Image denoising by sparse
3-{D} transform-domain collaborative filtering, \textit{IEEE Trans. Image
Process.} \textbf{16} (2007), 2080--2095.

\bibitem{DID04}
I.~Daubechies, M.~Defrise, and C.~De~Mol, An iterative thresholding algorithm
for linear inverse problems with a sparsity constraint, \textit{Comm. Pure
Appl. Math.} \textbf{57} (2004), 1413--1457.

\bibitem{DTT12}
T.~T. Do, L.~Gan, N.~H. Nguyen, and T.~D. Tran, Fast and efficient compressive
sensing using structurally random matrices, \textit{IEEE Trans. Signal
Process.} \textbf{60} (2012), 139--154.

\bibitem{DWS14}
W.~Dong, G.~Shi, X.~Li, Y.~Ma, and F.~Huang, Compressive sensing via nonlocal
low-rank regularization, \textit{IEEE Trans. Image Process.} \textbf{23}
(2014), 3618--3632.

\bibitem{DDL06}
D.~L. Donoho, Compressed sensing, \textit{IEEE Trans. Inf. Theory} \textbf{52}
(2006), 1289--1306.

\bibitem{EMP19}
M.~P. Edgar, G.~M. Gibson, and M.~J. Padgett, Principles and prospects for
single-pixel imaging, \textit{Nat Photonics} \textbf{13} (2019), 13--20.

\bibitem{FZH21}
Z.~Fabian, R.~Heckel, and M.~Soltanolkotabi, Data augmentation for deep
learning based accelerated {MRI} reconstruction with limited data, M.~Meila
and T.~Zhang (eds.), \textit{International Conference on Machine Learning},
vol. 139, 2021.

\bibitem{FJX24}
J.~Fu, Q.~Xie, D.~Meng, and Z.~Xu, Rotation equivariant proximal operator for
deep unfolding methods in image restoration, \textit{IEEE Trans. Pattern
Anal. Mach. Intell.} \textbf{46} (2024), 6577--6593.

\bibitem{GXB10}
X.~Glorot and Y.~Bengio, Understanding the difficulty of training deep
feedforward neural networks, Y.~W. Teh and M.~Titterington (eds.),
\textit{Proceedings of the Thirteenth International Conference on Artificial
Intelligence and Statistics}, vol.~9, 2010, 249--256.

\bibitem{GKL10}
K.~Gregor and Y.~LeCun, Learning fast approximations of sparse coding,
\textit{Proceedings of the 27th International Conference on International
Conference on Machine Learning}, 2010, 399--406.

\bibitem{HLC09}
L.~He and L.~Carin, Exploiting structure in wavelet-based {Bayesian}
compressive sensing, \textit{IEEE Trans. Signal Process.} \textbf{57} (2009),
3488--3497.

\bibitem{HAZ10}
A.~Horé and D.~Ziou, Image quality metrics: {PSNR} vs. {SSIM}, \textit{2010
20th International Conference on Pattern Recognition}, 2010, 2366--2369.

\bibitem{HZN21}
Z.~Hu, F.~Nie, R.~Wang, and X.~Li, Low rank regularization: A review,
\textit{Neural Networks} \textbf{136} (2021), 218--232.

\bibitem{KDP14}
D.~P. Kingma and J.~Ba, Adam: A method for stochastic optimization,
\textit{arXiv:1412.6980}  (2014).

\bibitem{KFB11}
F.~Knoll, K.~Bredies, T.~Pock, and R.~Stollberger, Second order total
generalized variation ({TGV}) for {MRI}, \textit{Magn. Reson. Med.}
\textbf{65} (2011), 480--491.

\bibitem{KAS17}
A.~Krizhevsky, I.~Sutskever, and G.~E. Hinton, {ImageNet} classification with
deep convolutional neural networks, \textit{Commun. ACM} \textbf{60} (2017),
84--90.

\bibitem{LKV15}
K.~Lenc and A.~Vedaldi, Understanding image representations by measuring their
equivariance and equivalence, \textit{2015 IEEE Conference on Computer Vision
and Pattern Recognition (CVPR)}, 2015, 991--999.

\bibitem{LCY09}
C.~Li, W.~Yin, and Y.~Zhang, User’s guide for {TVAL3}: {TV} minimization by
augmented lagrangian and alternating direction algorithms, \textit{CAAM Rep.}
\textbf{20} (2009), 4.

\bibitem{LHL25}
H.~Li and J.~Li, Truncated amplitude flow with coded diffraction patterns,
\textit{Inverse Problems} \textbf{41} (2025), Paper No. 015002, 30.

\bibitem{LZZ24}
K.~Li, B.~Zhang, and H.~Zhang, Reconstruction of inhomogeneous media by an
iteration algorithm with a learned projector, \textit{Inverse Problems}
\textbf{40} (2024), 075008.

\bibitem{LSG11}
S.~G. Lingala, Y.~Hu, E.~DiBella, and M.~Jacob, Accelerated dynamic {MRI}
exploiting sparsity and low-rank structure: k-t {SLR}, \textit{IEEE Trans.
Med. Imaging} \textbf{30} (2011), 1042--1054.

\bibitem{LJZ16}
J.~Liu, X.~Zhang, B.~Dong, Z.~Shen, and L.~Gu, A wavelet frame method with shape prior for ultrasound video segmentation, \textit{SIAM J. Imaging Sci.} \textbf{9} (2016), no.~2, 495--519.

\bibitem{LMD08}
M.~Lustig, D.~L. Donoho, J.~M. Santos, and J.~M. Pauly, Compressed sensing
{MRI}, \textit{IEEE Signal Process. Mag.} \textbf{25} (2008), 72--82.

\bibitem{MJB09}
J.~Mairal, F.~Bach, J.~Ponce, G.~Sapiro, and A.~Zisserman, Non-local sparse
models for image restoration, \textit{2009 IEEE 12th International Conference
on Computer Vision (ICCV)}, 2009, 2272--2279.

\bibitem{MDF01}
D.~Martin, C.~Fowlkes, D.~Tal, and J.~Malik, A database of human segmented
natural images and its application to evaluating segmentation algorithms and
measuring ecological statistics, \textit{Eighth IEEE International Conference
on Computer Vision, Vol II, Proceedings}, 2001, 416--423.

\bibitem{MVL21}
V.~Monga, Y.~Li, and Y.~C. Eldar, Algorithm unrolling: Interpretable, efficient
deep learning for signal and image processing, \textit{IEEE Signal Process.
Mag.} \textbf{38} (2021), 18--44.

\bibitem{MAP15}
A.~Mousavi, A.~B. Patel, and R.~G. Baraniuk, A deep learning approach to
structured signal recovery, \textit{2015 53rd Annual Allerton Conference on
Communication, Control, and Computing}, 2015, 1336--1343.

\bibitem{NDT09}
D.~Needell and J.~A. Tropp, Co{S}a{MP}: iterative signal recovery from
incomplete and inaccurate samples, \textit{Appl. Comput. Harmon. Anal.}
\textbf{26} (2009), 301--321.

\bibitem{NJW06}
J.~Nocedal and S.~J. Wright, \textit{Numerical optimization}, Springer Series
in Operations Research and Financial Engineering, 2nd edn., xxii+664 ,
Springer, New York, 2006.

\bibitem{QXG12}
X.~Qu, D.~Guo, B.~Ning, Y.~Hou, Y.~Lin, S.~Cai, and Z.~Chen, Undersampled {MRI}
reconstruction with patch-based directional wavelets, \textit{Magn. Reson.
Imaging} \textbf{30} (2012), 964--977.

\bibitem{QXH14}
X.~Qu, Y.~Hou, F.~Lam, D.~Guo, J.~Zhong, and Z.~Chen, Magnetic resonance image
reconstruction from undersampled measurements using a patch-based nonlocal
operator, \textit{Med. Image Anal.} \textbf{18} (2014), 843--856.

\bibitem{RSB10}
S.~Ravishankar and Y.~Bresler, {MR} image reconstruction from highly
undersampled k-space data by dictionary learning, \textit{IEEE transactions
on medical imaging} \textbf{30} (2010), 1028--1041.

\bibitem{RSB15}
S.~Ravishankar and Y.~Bresler, Efficient blind compressed sensing using
sparsifying transforms with convergence guarantees and application to
magnetic resonance imaging, \textit{SIAM J. Imaging Sci.} \textbf{8} (2015),
2519--2557.

\bibitem{ROF15}
O.~Ronneberger, P.~Fischer, and T.~Brox, {U-Net}: Convolutional networks for
biomedical image segmentation, N.~Navab, J.~Hornegger, W.~M. Wells, and A.~F.
Frangi (eds.), \textit{Medical Image Computing and Computer-assisted
Intervention, PT III}, vol. 9351, 2015, 234--241.

\bibitem{RYL22}
Y.~Ru, F.~Li, F.~Fang, and G.~Zhang, Patch-based weighted {SCAD} prior for
compressive sensing, \textit{Inform Sciences} \textbf{592} (2022), 137--155.

\bibitem{SBN11}
B.~Stephen, P.~Neal, C.~Eric, P.~Borja, and E.~Jonathan, Distributed
optimization and statistical learning via the alternating direction method of
multipliers, \textit{Found. Trends Inf. Retr.} \textbf{3} (2011), 1--122.

\bibitem{UMP11}
M.~Usman, C.~Prieto, T.~Schaeffter, and P.~G. Batchelor, k-t group sparse: A
method for accelerating dynamic {MRI}, \textit{Magn. Reson. Med.} \textbf{66}
(2011), 1163--1176.

\bibitem{WBR17}
B.~Wen, S.~Ravishankar, and Y.~Bresler, F{RIST}---flipping and rotation
invariant sparsifying transform learning and applications, \textit{Inverse
Problems} \textbf{33} (2017), 074007, 27.

\bibitem{WSJ15}
S.~J. Wright, Coordinate descent algorithms, \textit{Math. Program.}
\textbf{151} (2015), 3--34.

\bibitem{XYY16}
Y.~Xu and W.~Yin, A fast patch-dictionary method for whole image recovery,
\textit{Inverse Probl. Imaging} \textbf{10} (2016), 563--583.

\bibitem{YYS20}
Y.~Yang, J.~Sun, H.~Li, and Z.~Xu, {ADMM-CSNet}: A deep learning approach for
image compressive sensing, \textit{IEEE Trans. Pattern Anal. Mach. Intell.}
\textbf{42} (2020), 521--538.

\bibitem{YYW23}
Y.~Yang, Y.~Wang, J.~Wang, J.~Sun, and Z.~Xu, An unrolled implicit
regularization network for joint image and sensitivity estimation in parallel
{MR} imaging with convergence guarantee, \textit{SIAM J. Imaging Sci.}
\textbf{16} (2023), 1791--1824.

\bibitem{ZZW23}
Z.~Zha, B.~Wen, X.~Yuan, S.~Ravishankar, J.~Zhou, and C.~Zhu, Learning nonlocal
sparse and low-rank models for image compressive sensing: Nonlocal sparse and
low-rank modeling, \textit{IEEE Signal Process. Mag.} \textbf{40} (2023),
32--44.

\bibitem{ZZC16}
Z.~Zhan, J.-F. Cai, D.~Guo, Y.~Liu, Z.~Chen, and X.~Qu, Fast multiclass
dictionaries learning with geometrical directions in {MRI} reconstruction,
\textit{IEEE Trans. Biomed. Eng.} \textbf{63} (2016), 1850--1861.

\bibitem{ZJG18}
J.~Zhang and B.~Ghanem, {ISTA-Net}: Interpretable optimization-inspired deep
network for image compressive sensing, \textit{2018 IEEE/CVF Conference on
Computer Vision and Pattern Recognition (CVPR)}, 2018, 1828--1837.

\bibitem{ZJZ14}
J.~Zhang, D.~Zhao, and W.~Gao, Group-based sparse representation for image
restoration, \textit{IEEE Trans. Image Process.} \textbf{23} (2014),
3336--3351.

\bibitem{ZJC23}
J.~Zhang, B.~Chen, R.~Xiong, and Y.~Zhang, Physics-inspired compressive
sensing: Beyond deep unrolling, \textit{IEEE Signal Process. Mag.}
\textbf{40} (2023), 58--72.

\bibitem{ZKZ17}
K.~Zhang, W.~Zuo, Y.~Chen, D.~Meng, and L.~Zhang, Beyond a {G}aussian denoiser:
Residual learning of deep {CNN} for image denoising, \textit{IEEE Trans.
Image Process.} \textbf{26} (2017), 3142--3155.

\bibitem{ZKZW17}
K.~Zhang, W.~Zuo, S.~Gu, and L.~Zhang, Learning deep {CNN} denoiser prior for
image restoration, \textit{30th IEEE Conference on Computer Vision and
Pattern Recognition (CVPR 2017)}, 2017, 2808--2817.

\bibitem{ZLZ26}
L.~Zhao, Y.~Zhang, X.~Wang, J.~Zhang, H.~Bai, and A.~Wang, A survey on image
compressive sensing: From classical theory to the latest explicable deep
learning, \textit{Pattern Recognit.} \textbf{170} (2026).

\end{thebibliography}
\end{document}